     \newcommand{\ind}{_{n,k}}
\newcommand{\inv}{^{-1}}
\newcommand{\inda}{_{1,k}}
\newcommand{\bE}{\mathbb{E}}
\newcommand{\bP}{\mathbb{P}}
\newcommand{\bK}{\mathbb{K}}
\newcommand{\I}{\mathbb{I}}
\newcommand{\som}{\mu_k^2}
\newcommand{\delmu}{\Delta_k\mu_k}
\newtheorem{assumption}{Assumption}
\newtheorem{definition}{Definition}
\newtheorem{proposition}{Proposition}
\newtheorem{corollary}{Corollary}
\newtheorem{claim}{Claim}
\newtheorem{remark}{Remark}
\newtheorem{lemma}{Lemma}
\newtheorem{theorem}{Theorem}
\begin{document}

%

%

\twocolumn[

\aistatstitle{Budget-Constrained Bandits over General Cost and Reward Distributions}

\aistatsauthor{ Semih Cayci \And Atilla Eryilmaz \And  R. Srikant }

\aistatsaddress{ ECE, The Ohio State University \And  ECE, The Ohio State University \And CSL and ECE, UIUC } 

]

\begin{abstract}
We consider a budget-constrained bandit problem where each arm pull incurs a random cost, and yields a random reward in return. The objective is to maximize the total expected reward under a budget constraint on the total cost. The model is general in the sense that it allows correlated and potentially heavy-tailed cost-reward pairs that can take on negative values as required by many applications. We show that if moments of order $(2+\gamma)$ for some $\gamma > 0$ exist for all cost-reward pairs, $O(\log B)$ regret is achievable for a budget $B>0$. In order to achieve tight regret bounds, we propose algorithms that exploit the correlation between the cost and reward of each arm by extracting the common information via linear minimum mean-square error estimation. We prove a regret lower bound for this problem, and show that the proposed algorithms achieve tight problem-dependent regret bounds, which are optimal up to a universal constant factor in the case of jointly Gaussian cost and reward pairs. 
\end{abstract}

\section{Introduction}

Multi-armed bandit problem (MAB) has been the prominent model for the exploration-and-exploitation dilemma since its introduction \citep{robbins1952some, lai1985asymptotically, berry1985bandit}. Due to the universality of the dilemma, bandit algorithms have found a broad area of applications from medical trials and dynamic pricing to ad allocation. As a common feature of all MAB instances, each action depletes a cost from a limited budget, and a random reward is obtained in return. In such a setting, the aim of the decision maker is to balance the exploration and exploitation at every step so as to maximize the cumulative reward until depleting the budget. In the classical MAB setting, each action is assumed to consume a known deterministic amount of resource, i.e., one time-slot. However, in many problems of interest, different tasks consume different and random amount of resources, which can be unbounded and potentially correlated with the reward. The applications of this extended setting include routing in communications and task scheduling in computing systems, where the controller sequentially makes a selection among multiple arms (alternative paths or task types) so as to maximize the total reward (i.e., throughput) within a given time budget. In these applications, the cost (i.e., completion time) and reward of each arm pull can be potentially correlated and heavy-tailed \citep{harchol2000task, jelenkovic2013characterizing}.

In this paper, we investigate the unique dynamics of this extended budget-constrained bandit setting with general cost and reward distributions. Unlike the classical stochastic MAB problem, each action incurs a random cost and yields a random reward in our model. Under a budget constraint $B$, the objective of the controller is to maximize the expected cumulative reward until the total cost exceeds the budget. As we will see, the correlation and variability of the cost-reward pairs can have a substantial impact on the performance in this bandit setting, which we incorporate in the design of learning algorithms for near-optimal performance. Many of our results are obtained for a very general setting where the cost and reward can be correlated and heavy-tailed, but sharper results are presented for some interesting special cases.

\subsection{Main Contributions}
The main objective in this paper is to design efficient algorithms that achieve provably tight regret bounds in an extended setting of correlated and potentially heavy-tailed cost and reward. Our main contributions are as follows:

\begin{enumerate}
    \item \textbf{Exploiting the correlation:} One of the key contributions in this work is to use a linear minimum mean square (LMMSE) estimator to extract and exploit the correlation between the cost and reward of an arm (see Section \ref{subsec:ucb-b1}). Furthermore, we incorporate the effect of variability in cost-reward pairs through variance. Consequently, we achieve provably tight problem-dependent regret bounds in an extended setting of unbounded cost and reward.
    \item \textbf{Extension to unbounded cost and reward:} We develop novel design and analysis methods for the setting of unbounded and potentially heavy-tailed cost and reward pairs, and show that $O\big(\log(B)\big)$ regret is achievable if moments of order $2+\gamma$ exist for some $\gamma>0$ for all cost and reward pairs (see Section \ref{sec:median-alg}). 
    \item \textbf{Regret lower bounds:} We establish a regret lower bound for the budget-constrained bandit problem (see Section \ref{sec:regret-lb}). By using this result, we obtain explicit regret lower bounds for jointly Gaussian cost-reward distributions. Consequently, we prove that the algorithms we propose in this paper achieve tight regret bounds, which are optimal up to a constant factor in the case of jointly Gaussian cost and reward.
\end{enumerate}

 \subsection{Related Work}
 The classical stochastic multi-armed bandit problem, which is a specific case of the model we study in this paper, has been extensively studied in the literature. For detailed discussion on the basic model, we refer to \citep{bubeck2012regret, berry1985bandit}.
 
 The budget-constrained MAB problem and its variants were investigated in a variety of papers. In \citep{tran2012knapsack} and \citep{combes2015bandits}, budget-constrained multi-armed bandit problem is investigated where each arm pull incurs an arm-dependent and deterministic cost. In \citep{guha2009multi}, the budgeted-bandit problem with deterministic costs is investigated from a Bayesian perspective, and constant-factor approximation algorithms are proposed. In \citep{gyorgy2007continuous}, the continuous-time extension of the MAB problem with side information is investigated, which is an early example for the budget-constrained bandit problem. In \citep{badanidiyuru2013bandits, agrawal2014bandits}, the bandit problem under multiple budget constraints is examined, and problem-independent regret bounds of order $\tilde{O}(\sqrt{B})$ are obtained. Bandits with knapsacks have been extended to other bandit settings \citep{agrawal2016linear, badanidiyuru2014resourceful, sankararaman2017combinatorial, ding2013multi}. In \citep{xia2015thompson,xia2016budgeted}, the budget-constrained MAB problem is explored in a similar setting to ours. In these works, the cost and reward of each arm are supported in $[0,1]$, and the correlation between them is not exploited. In \citep{cayci2019learning}, the authors consider a variation of the budget-constrained bandit problem where the controller has the option to interrupt an ongoing cycle for a faster alternative. The interruption mechanism brings significantly different dynamics to the problem that is investigated in this paper.

 Bandits with heavy-tailed reward distributions are considered in \citep{liu2011multi, bubeck2013bandits}. These papers are still in the scope of the classical MAB setting: the budget is consumed deterministically at rate 1 by each action, so the dynamics of the random resource consumption with heterogeneous statistics are not included in the model.  
 


\section{System Setup}\label{sec:system-setup}
In this paper, we consider a bandit problem with $K$ arms. The set of arms is denoted by $\bK=\{1,2,\ldots, K\}$. Each arm $k\in\bK$ is described by a two-dimensional random process $\{(X_{n,k}, R_{n,k}):n\geq 1\}$ that is independent from other arms. If arm $k$ is chosen at $n$-th epoch, it incurs a cost of $X_{n,k}$ and yields a reward of $R_{n,k}$, where both are learned via a bandit feedback only after the decision is made. The controller has a cost budget $B>0$, and tries to maximize the expected cumulative reward it receives by sampling the arms wisely under this budget constraint. 

The pair $(X\ind,R\ind)$ is assumed to be independent and identically distributed over $n$, but the cost $X\ind$ and reward $R\ind$ can be positively correlated. We allow $X\ind$ to take on negative values, but the drift is assumed to be positive, i.e., there exists $\mu_*>0$ such that $\bE[X\ind] \geq \mu_* > 0$ for all $k$.

Let $\pi$ be an algorithm that yields a sequence of arm pulls $\{I_n^\pi\in \bK:n\geq 1\}$. Under $\pi$, the history until epoch $n$ is the following filtration:
\begin{equation}
    \mathcal{F}_n^\pi=\sigma(\{(X_{j,k},R_{j,k}):I_j^\pi = k, 1\leq j \leq n\}),
\end{equation}
\noindent where $\sigma(X)$ denotes the sigma-field of a random variable $X$. We call an algorithm $\pi$ admissible if $\pi$ is non-anticipating, i.e., $\{I_n^\pi = k\}\in\mathcal{F}_{n-1}^\pi$ for all $k, n$. The set of all admissible policies is denoted as $\Pi$. 

The total cost incurred in $n$ epochs under an admissible policy $\pi\in\Pi$ is a controlled random walk which is defined as $S_n^\pi = \sum_{i=1}^nX_{i, I_i^\pi}.$ The arm pulling process under an algorithm $\pi$ continues until the budget $B$ is depleted. We assume that the reward corresponding to the final epoch during which the budget is depleted is gathered by the controller. Thus, the total number of pulls under $\pi$ is defined as follows:
\begin{equation}
    N_\pi(B) = \inf\Big\{n: S_n^\pi > B\Big\}.
\end{equation}
\noindent Note that the total number of pulls $N_\pi(B)$ is a stopping time adapted to the filtration $\{(\mathcal{F}_t^\pi): t \geq 0\}$. With these definitions, the cumulative reward under a policy $\pi$ can be written as follows:
\begin{equation}
    \label{eqn:reward}
    {\tt REW}_\pi(B) = \sum_{i=1}^{N_\pi(B)}R_{i, I_i^\pi}.
\end{equation}

The objective in this paper is to design algorithms that achieve maximum $\bE[{\tt REW}_\pi(B)]$, or equivalently minimum regret, which is defined as follows:
\begin{equation}
    \label{eqn:regret-def}
    Reg_\pi(B) = \bE[{\tt REW}_{\pi^{\tt opt}}(B)] - \bE[{\tt REW}_\pi(B)],
\end{equation}
\noindent where $\pi^{\tt opt}(B)$ denotes the optimal policy: $$\pi^{\tt opt}(B) \in \underset{\pi^\prime\in\Pi}{\arg\max}~\bE[{\tt REW}_{\pi^\prime}(B)],$$ for any $B > 0$.

In the following section, we investigate the optimal policy that maximizes the expected cumulative reward when all arm distributions are known, and provide low-complexity approximations that have desirable performance characteristics.

\section{Approximations of the Oracle}
The optimization problem described in Section \ref{sec:system-setup} is a variant of the unbounded knapsack problem, and it is known that similar stochastic control problems are PSPACE-hard  \citep{badanidiyuru2013bandits, papadimitriou1999complexity}. In order to find a tractable benchmark, we will consider approximation algorithms with provably good performance in this section.

The main quantity of interest will be the reward rate, which is defined as follows:
\begin{equation}
    r_k = \frac{\bE[R_{1, k}]}{\bE[X_{1,k}]},~k\in \bK.
\end{equation}
\noindent Intuitively, if arm $k$ is chosen persistently until the budget $B>0$ is depleted, the cumulative reward becomes $r_kB+o(B)$ as $B\rightarrow \infty$. The additive $o(B)$ term is $O(1)$ if $\bE[(X_{1,k}^+)^2]<\infty$ by Lorden's inequality \citep{asmussen2008applied}. Hence, pulling the arm with the highest reward rate is a logical choice.

In the following, we prove that the optimality gap is $O(1)$ under mild moment conditions, which covers the case of heavy-tailed cost-reward pairs.


\begin{definition}[Optimal Static Algorithm]
    Let $k^*$ be the arm with the highest reward rate: $$k^* \in \underset{k\in\bK}{\arg\max} ~r_k.$$ The optimal static policy, denoted by $\pi^*$, pulls $k^*$ until the budget is depleted: $I_n^{\pi^*} = k^*$ for all $n\leq N_{\pi^*}(B)$.
\end{definition}

The main result of this section is the following proposition, which implies that $\pi^*$ is a plausible approximation algorithm for $\pi^{\tt opt}(B)$ for all $B>0$ under mild moment conditions.

\begin{assumption}\label{assn:moment-basic}
    There exists $\gamma > 0$ such that $\bE[(X_{1,k}^+)^{2+\gamma}] < \infty$ for all $k\in\bK$.
\end{assumption}

\begin{proposition}[Optimality Gap for $\pi^*$]\label{prop:optimality-gap}
    Under Assumption \ref{assn:moment-basic}, there exists a constant $$G^\star = G^\star\Big(\min\limits_k\bE[X_{1,k}],\max\limits_k Var(X_{1,k})\Big) < \infty,$$ independent of $B$ such that the following holds:
    \begin{equation}
        \max\limits_{\pi\in\Pi}~\bE[{\tt REW}{\pi}(B)]-\bE[{\tt REW}{\pi^*}(B)] \leq G^\star,
    \end{equation}
    \noindent for any $B > 0$. Consequently, $\pi^*$ is asymptotically optimal as $B\rightarrow \infty$.
\end{proposition}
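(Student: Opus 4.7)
The plan is to compare any admissible policy $\pi$ with the static choice $\pi^*$ through Wald's identity, and then reduce the regret gap to the expected overshoot $M_B^\pi := S_{N_\pi(B)}^\pi - B$.

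First, I would verify that $\bE[N_\pi(B)] < \infty$ for every $\pi \in \Pi$: the conditional drift $\bE[X_{n,I_n^\pi}\mid\mathcal{F}_{n-1}^\pi] \geq \mu_*>0$ makes $S_n^\pi - n\mu_*$ a submartingale, so by optional stopping $\bE[N_\pi(B)] \leq (B + O(1))/\mu_*$. Wald's identity applied at the stopping time $N_\pi(B)$ then gives
\begin{equation*}
\bE[{\tt REW}_\pi(B)] = \sum_{k\in\bK}\bE[R_{1,k}]\,\bE[N_k^\pi(B)],\quad \bE[S_{N_\pi(B)}^\pi] = \sum_{k\in\bK}\bE[X_{1,k}]\,\bE[N_k^\pi(B)],
\end{equation*}
where $N_k^\pi(B) := \sum_{i=1}^{N_\pi(B)}\I\{I_i^\pi = k\}$ counts pulls of arm $k$.

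Factoring $\bE[R_{1,k}] = r_k\bE[X_{1,k}]$ and using $r_{k^*} = \max_k r_k$ yields
\begin{equation*}
\bE[{\tt REW}_\pi(B)] \leq r_{k^*}\bE[S_{N_\pi(B)}^\pi] = r_{k^*}\bigl(B + \bE[M_B^\pi]\bigr).
\end{equation*}
Applying the same identity to the static policy $\pi^*$ gives $\bE[{\tt REW}_{\pi^*}(B)] = r_{k^*}\bigl(B + \bE[M_B^{\pi^*}]\bigr) \geq r_{k^*}B$, so
\begin{equation*}
\bE[{\tt REW}_\pi(B)] - \bE[{\tt REW}_{\pi^*}(B)] \leq r_{k^*}\,\bE[M_B^\pi].
\end{equation*}

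The main obstacle is proving $\sup_{B>0}\sup_{\pi\in\Pi}\bE[M_B^\pi] < \infty$ with a bound depending only on $\min_k\bE[X_{1,k}]$ and $\max_k Var(X_{1,k})$. Since $S_{N_\pi(B)-1}^\pi \leq B$ by the stopping rule, $M_B^\pi \leq X_{N_\pi(B), I_{N_\pi(B)}^\pi}^+$, so it is enough to bound the expected positive part of the cost incurred at the final pull. I would derive this via a Lorden-type overshoot inequality tailored to a multi-armed adapted random walk with possibly negative and heavy-tailed increments. At each candidate stopping epoch $n$, because $(X_{n,k},R_{n,k})$ is independent of $\mathcal{F}_{n-1}^\pi$ while $I_n^\pi\in\mathcal{F}_{n-1}^\pi$, the conditional tail of $X_{n,I_n^\pi}$ beyond the level $w = B - S_{n-1}^\pi$ coincides with the single-arm tail $\bE[X_{1,I_n^\pi}\I\{X_{1,I_n^\pi}>w\}]$. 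Summing these contributions over $n$ and invoking a size-biased renewal argument with $\bE[X_{1,k}^2] = Var(X_{1,k}) + \bE[X_{1,k}]^2 < \infty$ (finite under Assumption~\ref{assn:moment-basic}) produces the uniform constant $G^\star$. The adapted multi-arm structure is exactly what prevents a direct appeal to classical Lorden, making this the technically delicate step of the argument.
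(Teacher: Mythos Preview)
Your proposal is correct and follows essentially the same route as the paper: bound any admissible $\pi$'s reward above by $r^*\bE[S_{N_\pi(B)}^\pi]$ via a Wald-type identity, bound $\pi^*$'s reward below by $r^*B$, and then control the overshoot $\bE[S_{N_\pi(B)}^\pi]-B$ uniformly in $\pi$ and $B$. For the overshoot step you correctly flag as delicate, the paper invokes a result of \citet{lalley1986control} on first-passage overshoots of processes with uniformly bounded conditional drift and variance---precisely the Lorden-type inequality for controlled (multi-arm) random walks that you sketch via a size-biased argument.
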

\begin{proof}
   The proof of Proposition \ref{prop:optimality-gap} is based on tools from stochastic control, and is given in Appendix A.
\end{proof}
\noindent Proposition \ref{prop:optimality-gap} implies that the optimality gap of the optimal static policy is a constant with respect to the budget $B$, which depends on the first- and second-order moments of the cost. This extends the result presented in \citep{xia2016budgeted} for bounded and strictly positive costs to unbounded costs with positive drift that can take on negative values. Also, for small $B$
values, there can be dynamic policies that outperform this simple static policy \citep{dean2004approximating}. However, the optimality gap is still $O(1)$ for these dynamic policies, therefore we consider $\pi^*$ for its simplicity and efficiency.

Now that we have an accurate approximation for the oracle, we propose the first and basic algorithms that assume the knowledge of second-order moments.

\section{Algorithms for Known Second-Order Moments}\label{sec:alg-known-var}
In this section, we will assume that the second-order moments of all cost-reward pairs are known by the decision maker. First, in Section \ref{subsec:ucb-b1}, we will consider the case $(X\ind,R\ind)$ are jointly Gaussian, and propose a learning algorithm that achieves tight regret bound on the order of $O(\log(B))$ by using the correlation information. Then, in Section \ref{sec:median-alg}, we will study the general case where the cost and reward can be unbounded and potentially heavy-tailed, and propose algorithms that achieve the same regret bounds (up to a constant) as the sub-Gaussian case.

The following proposition provides a basis for the algorithm design and analysis throughout the paper.
\subsection{Preliminaries: Rate Estimation}
Let $\theta = (\theta_1,\theta_2)\in\mathbb{R}^2$ be a pair of unknown constants for which $r=\frac{\theta_2}{\theta_1}$ is to be estimated. The following proposition yields a useful device to obtain concentration results for r from concentration results for $\theta_1$ and $\theta_2$ for this estimation procedure.

\begin{proposition}[Rate Estimation]\label{prop:rate-est}
    Let $\widehat{\theta}_1$ and $\widehat{\theta}_2$ be estimators for $\theta_1>0,\theta_2\geq 0$, respectively. If \begin{equation}
        \eta\in\big(0,\frac{\theta_1(\lambda-1)}{\lambda}\big),
        \label{eqn:st-condition}
    \end{equation} \noindent for some $\lambda > 1$, then we have the following result:
    \begin{multline*}
    \bP\Big(|r-\frac{\hat{\theta}_2}{\hat{\theta}_1}| > \frac{\lambda(\epsilon+r\eta)}{\theta_1}\Big) \leq \bP(|\hat{\theta}_1-\theta_1|>\eta)\\+\bP(|\hat{\theta}_2-\theta_2|>\epsilon).
    \end{multline*}
\end{proposition}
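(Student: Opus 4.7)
The plan is to work on the complement event where both estimators concentrate, bound the ratio deterministically, and finish with a union bound. Concretely, set
\begin{equation*}
A = \{|\widehat{\theta}_1 - \theta_1| \leq \eta\}, \qquad B = \{|\widehat{\theta}_2 - \theta_2| \leq \epsilon\},
\end{equation*}
and show that $A \cap B \subseteq \{|r - \widehat{\theta}_2/\widehat{\theta}_1| \leq \lambda(\epsilon + r\eta)/\theta_1\}$. Taking complements and applying the union bound then yields the stated inequality.

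To obtain the deterministic bound on $A \cap B$, I would start from the algebraic identity
\begin{equation*}
r - \frac{\widehat{\theta}_2}{\widehat{\theta}_1} = \frac{\theta_2 \widehat{\theta}_1 - \theta_1 \widehat{\theta}_2}{\theta_1 \widehat{\theta}_1} = \frac{\theta_2(\widehat{\theta}_1 - \theta_1) - \theta_1(\widehat{\theta}_2 - \theta_2)}{\theta_1 \widehat{\theta}_1},
\end{equation*}
and apply the triangle inequality to get
\begin{equation*}
\Bigl|r - \tfrac{\widehat{\theta}_2}{\widehat{\theta}_1}\Bigr| \leq \frac{\theta_2 |\widehat{\theta}_1 - \theta_1| + \theta_1 |\widehat{\theta}_2 - \theta_2|}{\theta_1 |\widehat{\theta}_1|} \leq \frac{\theta_2 \eta + \theta_1 \epsilon}{\theta_1 |\widehat{\theta}_1|},
\end{equation*}
valid on $A \cap B$.

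The step that requires the hypothesis on $\eta$ is the lower bound on $|\widehat{\theta}_1|$. On $A$, we have $\widehat{\theta}_1 \geq \theta_1 - \eta$, and the constraint $\eta < \theta_1(\lambda-1)/\lambda$ gives $\theta_1 - \eta > \theta_1/\lambda > 0$. Substituting $|\widehat{\theta}_1| > \theta_1/\lambda$ into the previous display and recalling $r = \theta_2/\theta_1$ produces
\begin{equation*}
\Bigl|r - \tfrac{\widehat{\theta}_2}{\widehat{\theta}_1}\Bigr| \leq \frac{\lambda(\theta_2 \eta + \theta_1 \epsilon)}{\theta_1^2} = \frac{\lambda(r\eta + \epsilon)}{\theta_1},
\end{equation*}
which is exactly the desired bound on $A \cap B$.

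The only nontrivial step is recognizing that the condition $\eta < \theta_1(\lambda-1)/\lambda$ is equivalent to $\theta_1 - \eta > \theta_1/\lambda$, which keeps $\widehat{\theta}_1$ bounded away from zero and controls the blow-up of the ratio; everything else is triangle inequality and union bound. I do not anticipate any real obstacle beyond ensuring the denominators stay positive, which is precisely what the $(0,\theta_1(\lambda-1)/\lambda)$ condition enforces.
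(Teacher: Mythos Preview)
Your argument is correct and matches the paper's approach: the paper only sketches the proof via a figure, noting that when $(\hat\theta_1,\hat\theta_2)$ lies in the rectangle $A(\eta,\epsilon)=[\theta_1-\eta,\theta_1+\eta]\times[\theta_2-\epsilon,\theta_2+\epsilon]$ the maximum deviation of $\hat r$ from $r$ is $\lambda(\epsilon+r\eta)/\theta_1$, attained at a corner. Your algebraic decomposition plus the lower bound $\hat\theta_1>\theta_1/\lambda$ is exactly the rigorous version of that picture, so nothing is missing.
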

\noindent Therefore, if $\hat{\theta}_1$ and $\hat{\theta}_2$ both achieve exponential convergence rate, then $\frac{\hat{\theta}_2}{\hat{\theta}_1}$ converges to $r$ exponentially fast. 
   The intuition behind the proposition is illustrated in Figure \ref{fig:rate-est}.
\begin{figure}\label{fig:rate-est}
    \begin{center}
        \includegraphics[scale=0.4]{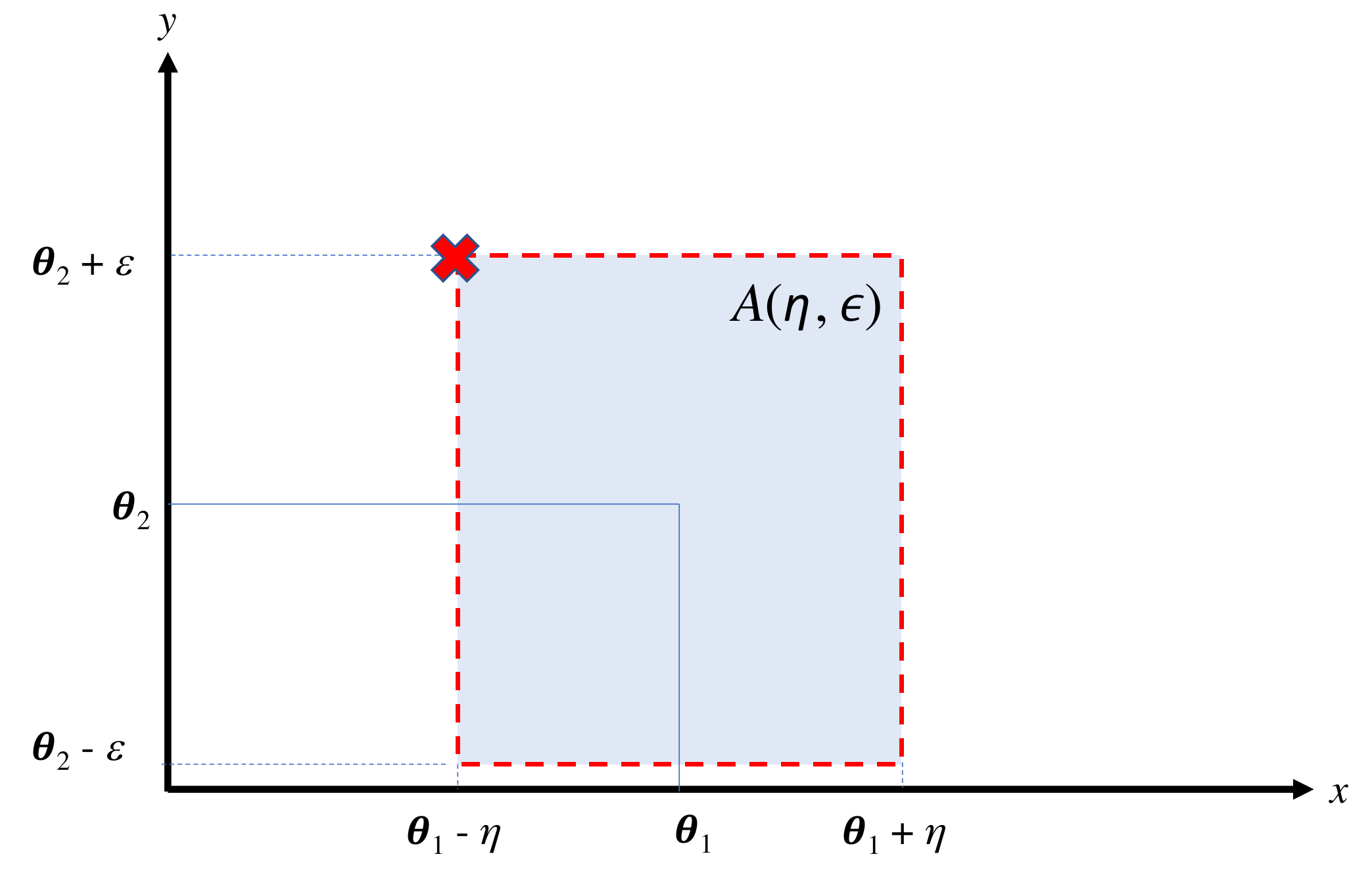}
    \end{center}
    \caption{If $(\hat{\theta}_1,\hat{\theta}_2)$ is in the high-probability set $A(\eta,\epsilon)$, then the maximum deviation of $\hat{r}=\frac{\hat{\theta}_2}{\hat{\theta}_1}$ from $r$ is $\frac{\lambda(\epsilon+r\eta)}{\theta_1}$, and it is achieved at the marked corner.}
\end{figure}

\begin{remark}[Stability of the rate estimator]\label{rem:rate-est}\normalfont 
The condition $\eta < \theta_1$, i.e., sufficient concentration of the estimator around the true parameter $\theta_1$, is crucial for Proposition \ref{prop:rate-est}. Note that if the variability of the mean estimator is high and thus $A(\eta,\epsilon)$ intersects with the $y$-axis, then the above bound is useless as $\hat{r}$ can have arbitrarily large deviations from $r$. 

\end{remark}

In the following, we propose algorithms under the assumption that the second-order moments for each arm $k$ is known by the controller.

\subsection{Sub-Gaussian Case: Algorithm {\tt UCB-B1}}\label{subsec:ucb-b1}
The main idea behind {\tt UCB-B1} is to use an upper confidence bound for the reward rate $r_k$. Let $T_k(n)$ be the number of pulls for arm $k$ in the first $n$ stages and $\widehat{r}_{k,n}=\frac{\max\{0,\widehat{\bE}_n[R_k]\}}{\max\{b, \widehat{\bE}_n[X_k]\}}$ where 
\begin{align*}
    \widehat{\bE}_n[X_k] &= \frac{1}{T_k(n)}\sum_{i=1}^n\I\{I_i=k\}X_{i,k}, \\
    \widehat{\bE}_n[R_k] &= \frac{1}{T_k(n)}\sum_{i=1}^n\I\{I_i=k\}R_{i,k},
\end{align*}
\noindent and $b \leq \bE[X\inda]/2$ for all $k$. Instead of estimating $\bE[X\inda]$ and $\bE[R\inda]$ separately from the samples of $(X\ind,R\ind)$, the correlation between $X\ind$ and $R\ind$ can be exploited to tighten the upper confidence bound for $r_k$. This is achieved by estimating $R\ind$ by a linear estimator $\omega X\ind$ so as to minimize $Var(R\ind-\omega X\ind)$. Let \begin{align}\label{eqn:mmse}
    V(X_{1,k},R_{1,k}) = \min\limits_{\omega\in\mathbb{R}}~Var(R_{1,k}-\omega X_{1,k}).
\end{align}
\noindent If $Var(X_{n,k})>0$, we have:
\begin{align}\label{eqn:lmmse}
\begin{aligned}
        \omega_k &= \underset{\omega\in\mathbb{R}}{\arg\min}~Var(R_{1,k}-\omega X_{1,k}),\\
        &= \frac{Cov(X_{1,k},R_{1,k})}{Var(X_{1,k})},
\end{aligned}
\end{align}
\noindent by the orthogonality principle \citep{poor2013introduction}, and the optimal value of the objective is given by: $$V(X_{1,k},R_{1,k}) = Var(R_{1,k})-\omega_k^2Var(X_{1,k}).$$ If $Var(X_{n,k})=0$, we have $V(X_{1,k},R_{1,k}) = Var(R_{1,k})$. This implies that $\omega_k$ and $V$ can be computed from the second-order moments of $(X\ind,R\ind)$, which are assumed to be given in this section. For simplicity, we assume $\omega_k\leq r_k$ for all $k$ throughout the paper.

For non-negative $(M_X,M_R,L)$ that will be specified later, let
\begin{align*}
    \epsilon_{k,n}^{\tt B}&=\frac{2\alpha M_R\log(n)}{3T_k(n)}+\sqrt{L\alpha\frac{V(X_{1,k},R_{1,k})\log(n)}{T_k(n)}}, \\
    \eta_{k,n}^{\tt B}&=\frac{2\alpha M_X\log(n)}{3T_k(n)}+\sqrt{L\alpha\frac{Var(X_{1,k})\log(n)}{T_k(n)}}.
\end{align*}



\noindent Then, if $S_n^\pi < B$, i.e., there is a remaining budget, then the {\tt UCB-B1} Algorithm pulls an arm at stage $n+1$ according to:
\begin{equation*}
    I_{n+1}\in\arg\max\limits_k~\Big\{\widehat{r}_{k,n} + \widehat{c}_{k,n}^{\tt B1}  \Big\},
\end{equation*}
\noindent where $$\widehat{c}_{k,n}^{\tt B} =  
      1.4\frac{\epsilon_{k,n}^{\tt B}+(\widehat{r}_{k,n}-\omega_k)\eta_{k,n}^{\tt B}}{\big(\widehat{\bE}_n[X_k])^+}$$ if the stability condition \eqref{eqn:st-condition} holds for $\eta = \eta_{k,n}^{\tt B}$ and $\lambda = 1.28$, and $\widehat{c}_{k,n}^{\tt B} =\infty$ otherwise.


The regret performance of {\tt UCB-B1} is presented in the following theorem.
\begin{theorem}[Regret Upper Bound for {\tt UCB-B1}]\label{thm:ucb-b1}
Let $\Delta_k = r^*-r_k$, $\lambda = 1.28$, 
\begin{equation}\label{eqn:sigma-k}
    \sigma_k^2 = 
      V(X_{1,k},R_{1,k})+(r^*-\omega_k)^2Var(X_{1,k}),
\end{equation}
\noindent for all $k\in\bK$ and recall that $\mu_*=\min\limits_k~\bE[X_{1,k}]$.
\begin{enumerate}
    \item \textbf{Bounded Cost and Reward:} If $|X_{1,k}|\leq M_X$, $|R_{1,k}|\leq M_R$ a.s., $\alpha > 2$ and $L=2$, then the regret under {\tt UCB-B1} is upper bounded as:
    \begin{equation*}
        Reg_{\pi^{\tt B1}}(B) \leq \alpha\sum_{k:\Delta_k>0}\log\Big(\frac{2B}{\mu_*}\Big)C_k^{\tt B1}+O(1),
    \end{equation*}
    \noindent for some constant $\zeta > 1$ where $M_k = M_R+r_kM_X$ and 
    \begin{equation*}
        C_k^{\tt B1} =\frac{42\sigma_{k}^2}{\Delta_k\bE[X_{1,k}]}+ 42M_k+21M_X\Delta_k,
    \end{equation*}
    \noindent for all $k$.
    \item \textbf{Jointly Gaussian Cost and Reward:} Let $(X\ind,R\ind)$ be jointly Gaussian with known second-order moments. Then, {\tt UCB-B1} with $\alpha > 2$, $M_X=M_R=0$ and $L=\frac{1}{2}$ yields the following regret bound:
    \begin{equation*}\label{eqn:reg-ucb-b1}
        Reg_{\pi^{\tt B1}}(B) \leq \alpha\sum_{k:\Delta_k>0}\log\Big(\frac{2B}{\mu_*}\Big)\frac{11\sigma_k^2}{\Delta_k\bE[X_{1,k}]} + O(1),
    \end{equation*}
    \noindent where $\sigma_k$ is defined in \eqref{eqn:sigma-k}.
\end{enumerate}

\end{theorem}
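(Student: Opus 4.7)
The plan is a UCB-style analysis tailored to the budget-constrained horizon, with the LMMSE tightening of the confidence radius as the distinctive ingredient. I would first reduce the regret to the form
\[
Reg_{\pi^{\tt B1}}(B) \leq \sum_{k:\Delta_k>0}\Delta_k\,\bE[X_{1,k}]\,\bE[T_k(N_{\pi^{\tt B1}}(B))] + O(1),
\]
via three ingredients: Proposition~\ref{prop:optimality-gap} gives $\bE[{\tt REW}_{\pi^{\tt opt}}(B)] \leq r^* B + O(1)$; Wald's identity for admissible policies yields $\bE[{\tt REW}_{\pi^{\tt B1}}(B)] = \sum_k \bE[T_k(N)]\,\bE[R_{1,k}]$ exactly, since $\{I_n = k, n \leq N\}\in\mathcal{F}_{n-1}^\pi$ by admissibility and $R_{n,k}$ is independent of $\mathcal{F}_{n-1}^\pi$; and the cost-side identity $\sum_k \bE[T_k(N)]\,\bE[X_{1,k}] = B + O(1)$ follows from $S_{N-1}^\pi \leq B < S_N^\pi$ together with Assumption~\ref{assn:moment-basic} bounding the overshoot.

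Second, I would establish the concentration underlying the confidence radius $\widehat{c}_{k,n}^{\tt B}$. The key identity is that $\widehat{\bE}_n[R_k] - \omega_k\widehat{\bE}_n[X_k]$ is the empirical mean of the i.i.d.\ sequence $\{R_{i,k} - \omega_k X_{i,k}\}$, whose variance equals $V(X_{1,k},R_{1,k})$ by \eqref{eqn:mmse}--\eqref{eqn:lmmse}. In the bounded case, Bernstein's inequality applied to this centered sequence produces exactly $\epsilon_{k,n}^{\tt B}$ (sub-exponential remainder $M_R\log(n)/T_k(n)$ plus Gaussian part $\sqrt{V\log(n)/T_k(n)}$), and an analogous Bernstein bound for $\widehat{\bE}_n[X_k]$ gives $\eta_{k,n}^{\tt B}$. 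In the jointly Gaussian case the exact Gaussian tail allows $M_X=M_R=0$ and the sharper constant $L=1/2$. Feeding these deviations into Proposition~\ref{prop:rate-est} with $\theta_1=\bE[X_{1,k}]$ and interpreting the ratio in the shifted form $r_k - \omega_k = \bE[R_{1,k}-\omega_k X_{1,k}]/\bE[X_{1,k}]$ shows that, on the event \eqref{eqn:st-condition}, the confidence event $\{r_k \leq \widehat{r}_{k,n}+\widehat{c}_{k,n}^{\tt B}\}$ fails with probability at most $2n^{-\alpha}$ (the factor $\lambda=1.28$ absorbed into the algorithm's multiplier $1.4$).

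Third, a standard UCB peeling step converts this into the $\bE[T_k(N)]$ bound. On the high-probability event that all confidence intervals are valid, the selection $I_{n+1}=k$ of a suboptimal arm forces
\[
r_k + 2\widehat{c}_{k,n}^{\tt B} \geq \widehat{r}_{k,n}+\widehat{c}_{k,n}^{\tt B} \geq \widehat{r}_{k^*,n}+\widehat{c}_{k^*,n}^{\tt B} \geq r^*,
\]
so $\widehat{c}_{k,n}^{\tt B} \geq \Delta_k/2$; substituting the explicit form of $\widehat{c}_{k,n}^{\tt B}$ and inverting yields $T_k(n) \leq c\,\alpha\,\sigma_k^2\log(n)/(\Delta_k^2\bE[X_{1,k}]^2)$, plus lower-order $M_X, M_R$ corrections in the bounded case. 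The quadratic $(r^*-\omega_k)^2 Var(X_{1,k})$ contribution to $\sigma_k^2$ in \eqref{eqn:sigma-k} appears precisely because the coefficient of $\eta_{k,n}^{\tt B}$ is the empirical $(\widehat{r}_{k,n}-\omega_k)$, which on the confidence event can be as large as $r^*-\omega_k$. Multiplying through by $\Delta_k\,\bE[X_{1,k}]$ produces the per-arm coefficient $C_k^{\tt B1}$ stated in the theorem.

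Finally, I would replace $\log N_{\pi^{\tt B1}}(B)$ by the deterministic $\log(2B/\mu_*)$. Since $\bE[X_{1,k}]\geq\mu_*>0$ and Assumption~\ref{assn:moment-basic} supplies a $(2+\gamma)$-moment, a martingale or Markov-type tail bound shows that $\bP(N_{\pi^{\tt B1}}(B) > 2B/\mu_*)$ decays fast enough that the event $\{N>2B/\mu_*\}$ contributes only $O(1)$ to $\bE[\Delta_k T_k(N)]$. The main obstacle is precisely this last step: the peeling bound is naturally stated in terms of $\log N$, whereas the regret is indexed by the budget $B$, and converting cleanly between them requires controlling the coupling between the adaptive selection $\{I_n\}$ and the budget-induced stopping time $N_\pi(B)$ using the positive drift $\mu_*$ and the $(2+\gamma)$-moment assumption to absorb the random-horizon fluctuations into the $O(1)$ remainder.
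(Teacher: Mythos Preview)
Your proposal is correct and matches the paper's approach: the same LMMSE-shifted Bernstein/Gaussian concentration fed through Proposition~\ref{prop:rate-est}, the same UCB decomposition (your peeling plus the stability event \eqref{eqn:st-condition} is exactly the paper's four-event Claim~\ref{claim:ucb-b1}), and the same martingale tail bound using the $(2+\gamma)$-moment to trade the random horizon $N_\pi(B)$ for $2B/\mu_*$. The only cosmetic difference is ordering: the paper front-loads the horizon conversion as a standalone regret decomposition (Proposition~\ref{prop:reg-dec}, proved via the submartingale Chebyshev inequality of Lemma~\ref{lem:chebyshev}), so that the UCB pull-count analysis in Lemma~\ref{lem:etn-subgaussian} already runs on a deterministic $n_0=2B/\mu_*$, whereas you carry the random $N$ through and convert $\log N$ at the end.
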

\begin{proof}
    The detailed proof, which will provide basis for the analysis of other algorithms proposed in this work, can be found in Appendix C. Note that the total reward is a controlled and stopped random walk with potentially unbounded support. Thus, the regret analysis requires new methods from the theory of martingales and stopped random walks. As such, we follow a proof strategy based on establishing a high-probability upper bound for $N_\pi(B)$, which can be found in Appendix B. 
\end{proof}

\subsection{Heavy-Tailed Case: Algorithm {\tt UCB-M1}}\label{sec:median-alg}
In this subsection, we design a general algorithm that achieves the regret in the sub-Gaussian case (up to a constant) under the mild moment condition that $\bE[(X\inda^+)^{2+\gamma}]<\infty$ for all $k$.

The empirical mean estimator played a central role in the design of the {\tt UCB-B1} Algorithm for sub-Gaussian distributions, which is proved to achieve $O(\log(B))$ regret. However, if we consider heavy-tailed distributions, the empirical mean estimator fails to achieve exponential convergence rate due to the frequent outliers \citep{bubeck2013bandits}. The median-based estimators, introduced in \citep{nemirovsky1983problem} provide an elegant method to boost the convergence speed in mean estimation. The idea of boosting the confidence of weak independent estimators by taking the median was extended to general point estimation problems (beyond the mean estimation) in \citep{minsker2015geometric}. In the following, we will use a variation of this method in the design of median-based rate estimators.

Consider arm $k\in\bK$ at stage $n$. For $$m = \lfloor 3.5\alpha\log(n)\rfloor+1,$$ we partition the observed samples $\{(X_{i,k},R_{i,k}):I_i=k,~1\leq i \leq n\}$ into index sets $G_1,G_2,\ldots,G_m$ of size $\lfloor T_k(n)/m \rfloor$ each. Then, for each $j\in\{1,2,\ldots,m\}$, let $\tilde{r}_{k, G_j}=\frac{\max\{\widehat{\bE}_{G_j}[R_k],0\}}{\max\{\widehat{\bE}_{G_j}[X_k],b\}}$ where $b \leq \bE[X\inda]/2$, and
\begin{align*}
    \widehat{\bE}_{G_j}[X_k] = \sum_{i\in G_j}\frac{X_{i,k}}{|G_j|},\hskip 0.5cm \widehat{\bE}_{G_j}[R_k] = \sum_{i\in G_j}\frac{R_{i,k}}{|G_j|}.
\end{align*}
\noindent The median-based rate estimator for arm $k$ at stage $n$ is thus $$\overline{r}_{k,n} = \underset{1\leq j \leq m}{\mbox{median}}~\tilde{r}_{k,G_j}.$$
The deviations in the cost and reward are as follows:
\begin{align*}
    \epsilon_{k,n}^{\tt M}&=11\sqrt{\alpha\frac{V(X_{1,k},R_{1,k})\log(n)}{T_k(n)}}, \\
    \eta_{k,n}^{\tt M}&=11\sqrt{\alpha\frac{Var(X_{1,k})\log(n)}{T_k(n)}}.
\end{align*}

Therefore, the decision at stage $(n+1)$ under {\tt UCB-M1} is as follows:
\begin{align}
    I_{n+1}\in\arg\max\limits_k\Big\{\overline{r}_{k,n} + \widehat{c}_{k,n}^{\tt M}\Big\}
\end{align}
\noindent where $$\widehat{c}_{k,n}^{\tt M} = \frac{2\sqrt{2} \big(\epsilon_{k,n}^{\tt M}+(\overline{r}_{k,n}-\omega_k)\eta_{k,n}^{\tt M}}{\Big(\underset{1\leq j \leq m}{\mbox{median}}~\widehat{\bE}_{G_j}[X_k]\Big)^+},$$ if the condition \eqref{eqn:st-condition} is satisfied for $\eta = \underset{1\leq j \leq m}{\mbox{median}}~\widehat{\bE}_{G_j}[X_k]$ and $\lambda = 1.28$.

For {\tt UCB-M1}, we have the following regret upper bound.
\begin{theorem}[Regret Upper Bound for {\tt UCB-M1}]\label{thm:ucb-m1}
If the following moment conditions hold: 
\begin{itemize}
    \item $\bE[(X\inda^+)^{2+\gamma}]<\infty$, for all $k$,
    \item $Var(R\inda)<\infty,$ for all $k$,
\end{itemize}
then the regret under {\tt UCB-M1} satisfies the following upper bound:
\begin{equation}\label{eqn:reg-ucb-m2}
        Reg_{\pi^{\tt M1}}(B) \leq \alpha\sum_{k:\Delta_k>0}\log\Big(\frac{2B}{\mu_*}\Big)\frac{C\sigma_{k}^2}{\Delta_k\bE[X_{1,k}]} + O(1),
    \end{equation}
    \noindent where $\sigma_k$ is as defined in \eqref{eqn:sigma-k} and $C>0$ is a constant.
\end{theorem}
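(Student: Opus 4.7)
The plan is to parallel the UCB-B1 analysis from Appendix C, replacing the empirical-mean concentration bounds with a median-of-means boosting argument valid under only the stated finite-variance assumptions. The algebraic backbone is the identity
\begin{equation*}
    \tilde{r}_{k,G_j}-r_k = \frac{\bigl(\widehat{\bE}_{G_j}[R_k-\omega_k X_k]-\bE[R_{1,k}-\omega_k X_{1,k}]\bigr) - (r_k-\omega_k)\bigl(\widehat{\bE}_{G_j}[X_k]-\bE[X_{1,k}]\bigr)}{\widehat{\bE}_{G_j}[X_k]},
\end{equation*}
valid up to the non-negative truncations (which only improve concentration), which decomposes the numerator into an LMMSE-residual piece with per-sample variance $V(X_{1,k},R_{1,k})$ and an $X$-piece scaled by $|r_k-\omega_k|$.

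First, I apply Chebyshev's inequality separately to these two pieces within each group $G_j$, with deviations of the form $\epsilon \propto \sqrt{V(X_{1,k},R_{1,k})\, m / T_k(n)}$ and $\eta \propto \sqrt{Var(X_{1,k})\, m/T_k(n)}$, to show that the per-group rate estimator $\tilde{r}_{k,G_j}$ deviates from $r_k$ beyond the UCB radius $\widehat{c}_{k,n}^{\tt M}$ with probability at most some $p_0 < 1/2$. Since the groups are disjoint and the samples i.i.d., the failure indicators across groups are i.i.d.\ Bernoulli, so the median $\overline{r}_{k,n}$ deviates beyond this radius only if at least $m/2$ of the groups fail; a Chernoff bound then gives failure probability $e^{-cm}$, and the choice $m = \lfloor 3.5\alpha\log(n) \rfloor + 1$ produces the $n^{-\alpha}$ tail needed for the UCB argument. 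An identical argument applied to $\underset{j}{\mbox{median}}\,\widehat{\bE}_{G_j}[X_k]$ handles the stability condition \eqref{eqn:st-condition}.

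With these concentration bounds in hand, the remainder is the standard UCB pull-count argument: a suboptimal arm $k$ is sampled at stage $n+1$ only if either (i) the stability condition fails, which requires $T_k(n) = O(\log n)$; (ii) a confidence event for arm $k$ or for the optimal arm $k^*$ fails, contributing $O(n^{-\alpha})$ per stage and $O(1)$ in total after summing; or (iii) $T_k(n)$ lies below the critical threshold $T_k^* = \Theta\bigl(\alpha \sigma_k^2 \log(n)/(\Delta_k \bE[X_{1,k}])^2\bigr)$, at which point the combined confidence radius shrinks below $\Delta_k/2$. Summing gives $\bE[T_k(N_{\pi^{\tt M1}}(B))] \leq C\alpha \log(N_{\pi^{\tt M1}}(B))\, \sigma_k^2/(\Delta_k \bE[X_{1,k}])^2 + O(1)$. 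The conversion to budget-based regret then proceeds via a Wald-type identity for the stopped random walk $S_{N_{\pi^{\tt M1}}(B)}^{\pi^{\tt M1}}$ together with the high-probability upper bound $N_{\pi^{\tt M1}}(B) \leq 2B/\mu_* + o(B)$ established in Appendix B; combining with Proposition \ref{prop:optimality-gap} yields \eqref{eqn:reg-ucb-m2}. The $(2+\gamma)$-th moment assumption on $X_{1,k}^+$ enters only at this final stopping-time step, through Lorden's inequality, to control the overshoot at $N_{\pi^{\tt M1}}(B)$.

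The main obstacle will be Step 1: since UCB-M1 takes the median of ratios $\tilde{r}_{k,G_j}$ rather than the ratio of medians, one must carry the LMMSE decomposition group by group and verify that the final variance scale $\sigma_k^2 = V(X_{1,k},R_{1,k}) + (r^* - \omega_k)^2 Var(X_{1,k})$ emerges from the $\epsilon + (r_k - \omega_k)\eta$ combination suggested by Proposition \ref{prop:rate-est}, rather than a loose $Var(R_{1,k})$ scale. A secondary subtlety is the denominator truncation $\max\{\cdot,b\}$, which is needed to keep the per-group ratio well-defined when a group's empirical cost mean is small or negative; one must show that on the high-probability event this truncation is inactive and that its residual contribution to the expected regret is $O(1)$.
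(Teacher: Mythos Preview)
Your proposal is correct and follows essentially the same route as the paper: per-group Chebyshev combined with Proposition~\ref{prop:rate-est} on the LMMSE-decomposed ratio, median boosting to obtain $n^{-\alpha}$ tails (the paper invokes Minsker's Theorem~3.1 rather than spelling out the Chernoff step, but the content is identical), the standard three-case UCB pull-count argument yielding Lemma~\ref{lem:etn-ht}, and finally the regret decomposition of Proposition~\ref{prop:reg-dec} to pass from $\bE[T_k(n)]$ to $Reg_\pi(B)$. Your identification of where the $(2+\gamma)$-moment enters---only in the stopping-time overshoot control of Propositions~\ref{prop:optimality-gap} and~\ref{prop:reg-dec}, not in the rate-estimator concentration---is also correct.
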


\begin{proof}
    The proof uses tools from the theory of martingales and stopped random walks, and can be found in Appendix B and Appendix C.  
\end{proof}

\begin{remark}\normalfont 
We have the following observations from Theorem \ref{thm:ucb-b1} and \ref{thm:ucb-m1}:
\begin{itemize}
    \item If $Var(X_{1,k})\downarrow 0$ and $\bE[X_{1,k}]=1$, the regret upper bounds match with the existing regret bounds for the stochastic bandit problem.
    \item Note that for positively correlated $X\ind$ and $R\ind$, one can ignore the correlation and use an upper confidence bound based on the separate estimation of $X\ind$ and $R\ind$. From Theorem \ref{thm:ucb-b1}, it can be observed that this scheme leads to a loss of $O\big(\sum_k Cov(X\inda,R\inda)\big)$. Moreover, as it will be seen in the next section, this is nearly the best way of exploiting the correlation in the case of jointly Gaussian cost and reward pairs.
    
    \item The {\tt UCB-M1} Algorithm achieves the same regret upper bound as the {\tt UCB-B1} Algorithm up to a constant with much less moment assumptions: while {\tt UCB-B1} requires sub-Gaussianity, {\tt UCB-M1} requires only existence of moments of order $(2+\gamma)$ for some $\gamma>0$ for the costs, and second-order moments for the rewards. However, the constant that multiplies the $O(\log B)$ term is much higher in {\tt UCB-M1} than {\tt UCB-B1}, which can be viewed as the cost of generality.
    
    \item If the cost is deterministic, i.e., $Var(X\inda)=0$, then the regret is monotonically decreasing in $\Delta_k$ as $O\Big(\frac{\log B}{\Delta_k}\Big)$ for each arm $k$. However, for random costs, since $r^*=r_k+\Delta_k$, the regret bounds have an additive term scaling linearly in $\Delta_k$ as $O\Big(\log\Big(\frac{2B}{\mu_*}\Big)\sum_k \frac{Var(X_{1,k})}{\bE[X_{1,k}]}\Delta_k\Big)$, which might seem strange at first since the separability of a suboptimal arm $k$ increases with its corresponding $\Delta_k$. This is a unique phenomenon observed in the case of stochastic costs: recall from Remark \ref{rem:rate-est} that the rate estimator is unstable when the confidence interval for the estimation of $\bE[X_{1,k}]$ is large, and thus it incurs $\bE[X\inda]\Delta_k$ regret per pull since rate estimation is unreliable. As it will be seen in Corollary \ref{cor:gaussian}, the same term appears with the same coefficient in the regret lower bound for jointly Gaussian cost-reward pairs, which implies that it is inevitable at least in that case.
 \end{itemize}
\end{remark}

\section{Regret Lower Bound for Admissible Policies}\label{sec:regret-lb}
In this section, we will propose regret lower bounds for the budget-constrained bandit problem based on \citep{lai1985asymptotically}. In the specific case of jointly Gaussian cost-reward pairs, we can determine a lower bound explicitly, which provides useful insight about the impact of variability and correlation on the regret. 


In order to establish a regret lower bound, assume that the joint distribution of $\{(X\ind, R\ind):n\geq 1\}$ is parametrized by $\theta_k\in\Theta_k$ for some parameter space $\Theta_k$, i.e., $(X\ind,R\ind)\sim P_{\theta_k}$. For any $k\in \bK$ and $\theta\in\Theta_k$, let $r_k(\theta) = \frac{\bE_\theta[R\inda]}{\bE_\theta[X\inda]}$ be the reward rate (i.e., reward per unit cost). Furthermore, for a given bandit instance $\vec{\theta}=(\theta_1,\theta_2,\ldots,\theta_K)$, let $r^*=\max\limits_k r_k(\theta_k)$ be the optimal reward rate, and $\Delta_k=r^*-r_k(\theta_k)$. For admissible policies, we have the following regret lower bound, which is an extension of Lai-Robbins style regret lower bounds for the stochastic bandit problem \citep{lai1985asymptotically, burnetas1996optimal}.

\begin{theorem}[Regret Lower Bound]\label{thm:regret-lb}
Suppose that $\bE[(X_{1,k})^{2+\gamma}]<\infty$ for some $\gamma>0$ and $Var(R\inda)<\infty$ hold for all $k$. Assume that the following conditions are satisfied by $P_{k,\theta}$ for any $k$:
\begin{enumerate}
    \item If $r_k(\theta_1) > r_k(\theta_2)$, then $D(P_{k,\theta_2}||P_{k,\theta_1})<\infty$,
    \item (Denseness) $r_k(\Theta_k) = \{r_k(\theta):\theta\in\Theta_k\}$ is dense,
    \item (Continuity) $\theta\mapsto D(P_{k,\theta_k}||P_{k,\theta})$ is a continuous mapping.
\end{enumerate}
For a given bandit instance $\vec{\theta} = (\theta_1,\theta_2,\ldots,\theta_K)$, if $\pi\in\Pi$ is a policy such that $\bE[T_k^\pi(n)] = o(n^\alpha)$ for any $\alpha > 0$ and $k$ such that $r_k(\theta_k) < r^*$, then we have the following lower bound:
\begin{equation}
    \underset{B\rightarrow\infty}{\lim\inf}~\frac{Reg_\pi(B)}{\log(B)} \geq \frac{1}{2}\sum_{k:\Delta_k>0}\frac{\bE[X\inda]\Delta_k}{D_k^\star},
\end{equation}
\noindent where $D_k^\star$ is the solution to the following optimization problem:
\begin{equation*}
D_k^\star = \min_{\theta\in\Theta_k} D(P_{k,\theta_k}||P_{k,\theta})
\mbox{ subject to } r_k(\theta) \geq r^*.
\end{equation*}
\end{theorem}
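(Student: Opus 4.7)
The plan is to adapt the classical Lai-Robbins change-of-measure template \citep{lai1985asymptotically,burnetas1996optimal} to accommodate the random stopping time $N_\pi(B)$ induced by the budget constraint. The proof breaks into three pieces: a decomposition of regret into per-arm pull counts, an information-theoretic lower bound on those counts, and a conversion from the random horizon $N_\pi(B)$ to $\log(B)$. For the decomposition, applying Wald's identity at the stopping time $N_\pi(B)$ gives
\begin{equation*}
\bE[{\tt REW}_\pi(B)] = \sum_{k\in\bK} \bE[R_{1,k}]\,\bE[T_k^\pi(N_\pi(B))],
\end{equation*}
and analogously $\bE[S_{N_\pi(B)}^\pi] = \sum_k \bE[X_{1,k}]\,\bE[T_k^\pi(N_\pi(B))] = B + O(1)$, where the residual is controlled by a Lorden-type bound under Assumption~\ref{assn:moment-basic}. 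Combining with Proposition~\ref{prop:optimality-gap}, which yields $\bE[{\tt REW}_{\pi^{\tt opt}}(B)] = r^* B + O(1)$, I obtain
\begin{equation*}
Reg_\pi(B) = \sum_{k:\Delta_k>0} \bE[X_{1,k}]\,\Delta_k\,\bE[T_k^\pi(N_\pi(B))] + O(1).
\end{equation*}

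For the core lower bound on per-arm pull counts, I would follow the standard change-of-measure construction. Fix a suboptimal arm $k$ and $\delta>0$. By the denseness and continuity hypotheses, choose $\theta_k' \in \Theta_k$ such that $r_k(\theta_k') > r^*$ and $D(P_{k,\theta_k}\|P_{k,\theta_k'}) \leq D_k^\star + \delta$; condition~(1) ensures the KL is finite so the log-likelihood ratio is well-defined. Under the perturbed instance $\vec{\theta}' = (\theta_1,\ldots,\theta_k',\ldots,\theta_K)$, arm $k$ becomes optimal, so the sub-polynomial-pulls hypothesis forces $\bE_{\vec{\theta}'}[n - T_k^\pi(n)] = o(n^\alpha)$ for every $\alpha>0$. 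A log-likelihood ratio/Markov-inequality argument along the lines of \citep{lai1985asymptotically} then yields, for any deterministic horizon $n$,
\begin{equation*}
\bE_{\vec{\theta}}[T_k^\pi(n)] \geq \frac{(1-o(1))\log(n)}{D_k^\star + \delta},
\end{equation*}
and sending $\delta\downarrow 0$ produces $\liminf_n \bE[T_k^\pi(n)]/\log(n) \geq 1/D_k^\star$.

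Finally, to pass from a deterministic horizon $n$ to the random budget $B$, I would use $\bE[S_{N_\pi(B)}^\pi] = B+O(1)$ together with a Chebyshev/Kolmogorov concentration on the random walk $S_n^\pi$ (available under Assumption~\ref{assn:moment-basic}) to obtain the high-probability lower bound $N_\pi(B) \geq B/(2\max_k \bE[X_{1,k}])$, which upgrades to $\bE[\log N_\pi(B)] \geq \log B - O(1)$; the factor $1/2$ in the claimed bound is traceable to this probabilistic conversion. Substituting into the regret decomposition and invoking the per-arm count lower bound at $n = N_\pi(B)$ then yields the theorem. The main obstacle will be the interaction between the random stopping time $N_\pi(B)$ and the likelihood-ratio argument, since the classical proof is stated for a deterministic horizon: one has to evaluate the change of measure on the event $\{N_\pi(B)\in [c_1 B, c_2 B]\}$ and absorb the residual tails using the moment assumption, which is also where the high-probability bound on $N_\pi(B)$ developed in Appendix~B for the upper-bound analysis will be reused.
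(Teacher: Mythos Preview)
Your three-step skeleton (regret decomposition into per-arm pull costs, a Burnetas--Katehakis lower bound on $\bE[T_k(n)]$ for deterministic $n$, and a conversion from the random horizon to a deterministic one) is exactly what the paper does, but your account of the conversion step is partly confused. In the paper (Lemma~\ref{lem:regret-lb-dec}), the regret sum is truncated at a \emph{deterministic} $n_0$ and the truncation error $\sum_{t=1}^{n_0}\bP(W_{t-1}^\pi>B)$ is controlled via the first-moment Doob/Markov bound $\bP(W_t^\pi>B)\leq t\mu_+/B$; this forces $n_0=\Theta(\sqrt{B})$, and $\log n_0=\tfrac{1}{2}\log B+O(1)$ is precisely the origin of the factor $1/2$. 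Your proposed second-moment Chebyshev/Kolmogorov bound would actually give $\bP(W_t^\pi>B)=O(t/B^2)$ for $t\leq cB$, which permits $n_0=\Theta(B)$ and would \emph{remove} the factor $1/2$ altogether---so your claim that the $1/2$ is ``traceable to this probabilistic conversion'' is backwards; your route, carried out correctly, proves a sharper statement than the paper's. Two smaller points: (i) you cannot literally ``invoke the per-arm count lower bound at $n=N_\pi(B)$'' since Lai--Robbins is for deterministic $n$; you must first fix $n_0$ with $N_\pi(B)\geq n_0$ w.h.p.\ and bound $\bE[T_k(n_0)\I\{N_\pi(B)<n_0\}]\leq n_0\,\bP(N_\pi(B)<n_0)$; (ii) Appendix~B supplies a high-probability \emph{upper} bound on $N_\pi(B)$ (for the regret upper bounds), whereas here you need a \emph{lower} bound on $N_\pi(B)$, i.e., an upper bound on $\bP(W_{n_0}^\pi>B)$---this is obtained by Doob's maximal inequality applied to the martingale part of $S_n^\pi$, as in the proof of Lemma~\ref{lem:regret-lb-dec}, not by reusing Appendix~B.
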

\begin{proof}
    The proof can be found in Appendix E.
\end{proof}

The regret lower bound has an explicit form if the cost and reward distributions of each arm is jointly Gaussian with a known covariance matrix.
\begin{corollary}[Jointly Gaussian Cost and Reward]\label{cor:gaussian}
    Let $(X\ind,R\ind)$ be jointly Gaussian:
    \begin{equation*}
        (X_{n,k},R_{n,k})\sim \mathcal{N}(\mu_k,\Sigma_k),
    \end{equation*}
    \noindent for all $k\in\bK$ where $\mu_k = \big(\bE[X\ind],\bE[R\ind]\big)$ and $$\Sigma_k = \begin{pmatrix}
Var(X\ind)&Cov(X\ind,R\ind)\\
Cov(X\ind,R\ind)&Var(R\ind)\\
\end{pmatrix}.$$ If $\Sigma_k$ is known and $\mu_k$ is unknown by the controller for all $k\in\bK$, we have the following regret lower bound for the Gaussian case:
\begin{equation}\label{eqn:lower-bd-gaussian}
    \underset{B\rightarrow\infty}{\lim\inf}~\frac{Reg_\pi(B)}{\log(B)} \geq \sum_{k:\Delta_k>0}\frac{\sigma_k^2}{\bE[X\inda]\Delta_k},
\end{equation}
\noindent where $\sigma_k^2$ is defined in \eqref{eqn:sigma-k}.
\end{corollary}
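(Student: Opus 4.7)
The plan is to apply Theorem~\ref{thm:regret-lb} directly, taking the natural parametrization $\theta_k = \mu_k \in \Theta_k = \{(\mu_X, \mu_R) : \mu_X > 0\}$ (the covariance $\Sigma_k$ is known and fixed), and to compute the per-arm KL infimum $D_k^\star$ in closed form. First I would verify the three hypotheses: any two bivariate Gaussians sharing a common positive-definite $\Sigma_k$ are mutually absolutely continuous with finite KL; the map $\theta \mapsto r_k(\theta) = \mu_R/\mu_X$ has image equal to all of $\mathbb{R}$, hence dense; and $\theta \mapsto D(P_{k,\theta_k} \| P_{k,\theta})$ is a quadratic form in $\theta$ and therefore continuous. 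The moment hypotheses of Theorem~\ref{thm:regret-lb} are automatic for Gaussian laws.

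Next, writing $\theta - \mu_k = (a, b)$, the standard KL formula for Gaussians with common covariance gives $D(P_{k,\mu_k}\|P_{k,\theta}) = \tfrac{1}{2}(a, b)\,\Sigma_k^{-1}(a, b)^T$. Using $\bE_{\mu_k}[R_{1,k}] = r_k \bE_{\mu_k}[X_{1,k}]$ together with $r^* = r_k + \Delta_k$, the constraint $r_k(\theta) \geq r^*$ rearranges (after clearing the positive denominator $\bE[X_{1,k}]+a$, which is positive near the true parameter) to $b - r^* a \geq \Delta_k \bE[X_{1,k}]$. At the KL minimum this linear constraint is active, so I set $b = r^* a + \Delta_k \bE[X_{1,k}]$, reducing the problem to minimizing $\tfrac{1}{2}(a,b)\Sigma_k^{-1}(a,b)^T$ subject to a single linear equality $w^T(a,b)^T = c$ with $w = (-r^*, 1)$ and $c = \Delta_k \bE[X_{1,k}]$.

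The Lagrangian formula for this one-constraint quadratic program gives minimum value $c^2/(2\, w^T \Sigma_k w)$. The key algebraic identification is
\begin{align*}
    w^T \Sigma_k w &= (r^*)^2 Var(X_{1,k}) - 2 r^* Cov(X_{1,k}, R_{1,k}) + Var(R_{1,k}) \\
    &= V(X_{1,k}, R_{1,k}) + (r^* - \omega_k)^2 Var(X_{1,k}) = \sigma_k^2,
\end{align*}
where the middle equality uses $\omega_k = Cov(X_{1,k}, R_{1,k})/Var(X_{1,k})$ and the LMMSE identity $V(X_{1,k}, R_{1,k}) = Var(R_{1,k}) - \omega_k^2 Var(X_{1,k})$ from \eqref{eqn:mmse}--\eqref{eqn:lmmse}. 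Thus $D_k^\star = \Delta_k^2 (\bE[X_{1,k}])^2 / (2\sigma_k^2)$, and substituting into the bound of Theorem~\ref{thm:regret-lb} yields $\bE[X_{1,k}] \Delta_k / (2 D_k^\star) = \sigma_k^2 / (\bE[X_{1,k}] \Delta_k)$, which is exactly \eqref{eqn:lower-bd-gaussian}.

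The main obstacle is not technical---the quadratic program is elementary---but conceptual: making the identification $w^T \Sigma_k w = \sigma_k^2$, which is precisely what ties the information-theoretic lower bound to the LMMSE-adjusted variance term $\sigma_k^2$ appearing in the upper bound of Theorem~\ref{thm:ucb-b1}, and explains why the two bounds match up to a constant factor.
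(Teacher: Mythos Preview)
Your proposal is correct and follows exactly the approach of the paper: apply Theorem~\ref{thm:regret-lb} with $\theta_k=\mu_k$, compute $D_k^\star=\frac{(\bE[X_{1,k}]\Delta_k)^2}{2\sigma_k^2}$ via the closed-form KL for Gaussians with common covariance, and substitute. The paper's proof merely asserts the value of $D_k^\star$ without justification, so your explicit quadratic-program computation and the algebraic identification $w^T\Sigma_k w=\sigma_k^2$ supply precisely the missing details.
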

\begin{proof}
    For known $\Sigma_k$, we have $D_k^\star = \frac{(\bE[X\inda]\Delta_k)^2}{2\sigma_k^2}$ for $\theta_k=\mu_k$ and $\Theta_k=\mathbb{R}_+^2$. Using this in Theorem \ref{thm:regret-lb} yields the result.
\end{proof}

\begin{remark}[Optimality of {\tt UCB-B1} and {\tt UCB-M1}]
Comparing \eqref{eqn:reg-ucb-b1} and \eqref{eqn:reg-ucb-m2} with \eqref{eqn:lower-bd-gaussian}, we can deduce that {\tt UCB-B1} and {\tt UCB-M1} achieve optimal regret up to a universal constant for the case of jointly Gaussian cost and reward pairs with known covariance matrix.
\end{remark}

\section{Algorithms for Unknown Second-Order Moments}\label{sec:alg-unknown-var}
In Section \ref{sec:alg-known-var}, we proposed algorithms under the assumption that the second-order moments are known for each arm $k$. However, in practice, these second-order moments are unknown, and therefore to be estimated from the samples collected via bandit feedback. In this section, we will propose algorithms that use these second-order moment estimates to achieve tight regret bounds. 


The general strategy in the development of the algorithms in this section is to use empirical estimates for the second-order moments that appear in {\tt UCB-B1} as a surrogate. 

\subsection{Bounded and Uncorrelated Cost and Reward: {\tt UCB-B2}}
For clarity, we first consider the case $X\ind$ and $R\ind$ are uncorrelated for all $k $ and $X\ind\in[0,M_X]$ and $R\ind\in[0,M_R]$ almost surely for known $M_X,M_R>0$. In this case, we will propose an algorithm based on a variant of the empirical Bernstein inequality, which was introduced in \citep{audibert2009exploration}.

For any $k$, let the variance estimate $\widehat{V}_{k,n}(X_k)$ be defined as follows: $$\widehat{V}_{k,n}(X_k) = \frac{1}{T_k(n)}\sum_{i=1}^n\I\{I_i=k\}\big(X_{i,k}-\widehat{\bE}_n[X\inda]\big)^2,$$ where $\widehat{\bE}_n[X_k]$ is the empirical mean of the observations up to epoch $n$.

The bias terms in {\tt UCB-B2} are defined as follows:
\begin{flalign*}
    \epsilon_{k,n}^{\tt B2}&=\sqrt{\frac{2\widehat{V}_{k,n}(R_k)\log(n^\alpha)}{T_k(n)}}+\frac{3M_R\log(n^\alpha)}{T_k(n)}, \\
    \eta_{k,n}^{\tt B2}&=\sqrt{\frac{2\widehat{V}_{k,n}(X_k)\log(n^\alpha)}{T_k(n)}}+\frac{3 M_X\log(n^\alpha)}{T_k(n)}.
\end{flalign*}
\noindent Let $\widehat{r}_{k,n}$ be the empirical reward rate estimator in Section \ref{subsec:ucb-b1}, and
\begin{equation}
    \widehat{c}_{k,n}^{\tt B2} = 1.4\frac{\epsilon_{k,n}^{\tt B2}+\widehat{r}_{k,n}\eta_{k,n}^{\tt B2}}{\big(\widehat{\bE}_n[X_k]\big)^+},
\end{equation} 
\noindent if the condition \eqref{eqn:st-condition} is satisfied with $\lambda = 1.28$ ($\widehat{c}_{k,n}^{\tt B2}=\infty$ otherwise). Then, at stage $n+1$, the following decision is made under {\tt UCB-B2}:
\begin{equation*}\label{eqn:bias-b2}
    I_{n+1}\in\arg\max\limits_k~\Big\{\widehat{r}_{k,n} + \widehat{c}_{k,n}^{\tt B2}  \Big\}.
\end{equation*}

The lack of knowledge for the second-order statistics loosen the upper confidence bound for the rate estimator, which in turn increases the regret. In the following, we provide the regret upper bounds for {\tt UCB-B2} to gain insight about the impact of using variance estimates on the performance of the algorithm.

\begin{theorem}[Regret Upper Bound for {\tt UCB-B2}]\label{thm:ucb-b2u}
Let $\sigma_k$ and $M_k$ be as defined in Theorem \ref{thm:ucb-b1}. Then, we have the following upper bound for the regret under {\tt UCB-B2}:
\begin{equation}
        Reg_{\pi^{\tt B2}}(B) \leq \alpha\sum_{k:\Delta_k>0}\log\Big(\frac{2B}{\mu_*}\Big)(C_k^{\tt B1}+\delta C_k)+O(1),
    \end{equation}
    \noindent where
    \begin{multline}\label{eqn:b2}
    \delta C_k = 21\big(\frac{M_X^4\Delta_k\mu_k}{Var^2(X\inda)}+\frac{Var(X\inda)\Delta_k}{\mu_k}\big).
    \end{multline}
    \noindent for $\mu_k = \bE[X\inda]$.
\end{theorem}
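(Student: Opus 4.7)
The plan is to follow the regret analysis of Theorem \ref{thm:ucb-b1}, with the central modification being the handling of the data-dependent bias terms $\eta_{k,n}^{\tt B2}$ and $\epsilon_{k,n}^{\tt B2}$. As before, the main quantity to bound is $\bE[T_k(N_\pi(B))]$ for each suboptimal arm $k$, from which the regret follows via the high-probability envelope on $N_\pi(B)$ developed in Appendix B together with Proposition \ref{prop:optimality-gap}. The good event in this setting is furnished by the empirical Bernstein inequality of \citep{audibert2009exploration}: with probability at least $1-O(n^{-\alpha})$ per arm,
\[
|\widehat{\bE}_n[X_k]-\mu_k|\leq\eta_{k,n}^{\tt B2},\qquad |\widehat{\bE}_n[R_k]-\bE[R_{1,k}]|\leq\epsilon_{k,n}^{\tt B2}.
\]
On this event, once the stability condition \eqref{eqn:st-condition} holds, Proposition \ref{prop:rate-est} yields $|\widehat{r}_{k,n}-r_k|\leq\widehat{c}_{k,n}^{\tt B2}$, and the standard UCB argument implies that arm $k\neq k^*$ can be selected at step $n+1$ only if $2\widehat{c}_{k,n}^{\tt B2}\geq\Delta_k$.

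The main obstacle is translating this inequality into an explicit threshold on $T_k(n)$, because $\widehat{c}_{k,n}^{\tt B2}$ depends on the random $\widehat{V}_{k,n}(X_k)$ and $\widehat{V}_{k,n}(R_k)$ rather than on the true variances. I would split the analysis into two regimes. In the \emph{mature} regime, a Bernstein-type bound applied to the sample variance of bounded random variables in $[0,M_X]$ (respectively $[0,M_R]$) yields $\widehat{V}_{k,n}(X_k)\leq 2\,Var(X_{1,k})$ with probability $1-O(n^{-\alpha})$, provided $T_k(n)\gtrsim M_X^4/Var^2(X_{1,k})$ (and analogously for $R_k$); substituting into $\eta_{k,n}^{\tt B2}$ and $\epsilon_{k,n}^{\tt B2}$ reproduces essentially the UCB-B1 threshold and therefore the $C_k^{\tt B1}$ contribution, with the factor-$2$ inflation absorbed in the constants. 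In the \emph{burn-in} regime $T_k(n)\lesssim M_X^4/Var^2(X_{1,k})$, variance concentration has not yet taken effect, so the pulls there are charged directly; at regret $\mu_k\Delta_k$ per pull this produces exactly the first summand $M_X^4\Delta_k\mu_k/Var^2(X_{1,k})$ of $\delta C_k$. The mature-regime variance inflation, together with the additive $3M_X\log(n^\alpha)/T_k(n)$ correction in $\eta_{k,n}^{\tt B2}$, furnishes the second summand $Var(X_{1,k})\Delta_k/\mu_k$ after the same per-pull accounting.

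To finish, I would union-bound the mean-concentration and variance-concentration failure events across arms and across $n\leq N_\pi(B)$, replace $N_\pi(B)$ by its high-probability envelope $\lesssim 2B/\mu_*$ inside the logarithmic factors, and sum $\mu_k\Delta_k\bE[T_k(N_\pi(B))]$ over suboptimal $k$; the residual $O(1)$ absorbs the small-probability failure contributions and the optimality gap from Proposition \ref{prop:optimality-gap}. The delicate point throughout is that the empirical variance appearing inside the mean-deviation bound and the sample-variance concentration event are functions of the same samples, so the union bound and constant tracking must be carried out with enough care that the coefficients of $\delta C_k$ emerge in the stated form, without additional polynomial inflation in $M_X$ or degradation in $Var(X_{1,k})$.
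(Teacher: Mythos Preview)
Your proposal follows essentially the same route as the paper: reduce to bounding $\bE[T_k(n)]$ via the regret decomposition of Proposition~\ref{prop:reg-dec}, invoke the empirical Bernstein inequality of \citet{audibert2009exploration} for mean concentration, control the sample variance $\widehat{V}_{k,n}(X_k)$ via a separate Bernstein-type bound, and charge the burn-in phase (before variance concentration is effective) directly to produce the $M_X^4\Delta_k\mu_k/Var^2(X_{1,k})$ term of $\delta C_k$. This is exactly the skeleton of Appendix~F.

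One point to tighten: you treat $R_k$ symmetrically with $X_k$, requiring $\widehat{V}_{k,n}(R_k)\leq 2\,Var(R_{1,k})$ and hence a reward-side burn-in of order $M_R^4/Var^2(R_{1,k})$. Carried through, this would add a term $M_R^4\Delta_k\mu_k/Var^2(R_{1,k})$ to $\delta C_k$ that is \emph{not} in the stated bound. The paper (and the lemma in Appendix~F) avoids this by handling the reward side in the Audibert--Munos--Szepesv\'ari style: the additive control $\sqrt{\widehat{V}_{k,n}(R_k)}\leq \sqrt{Var(R_{1,k})}+O\big(M_R\sqrt{\log(n^\alpha)/T_k(n)}\big)$ folds directly into the existing $M_k/(\Delta_k\mu_k)$ contribution of $C_k^{\tt B1}$, so no separate reward burn-in appears. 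The $M_X^4/Var^2(X_{1,k})$ term is specific to the cost side, where the stability requirement \eqref{eqn:st-condition} of the rate estimator forces the tighter multiplicative control on $\widehat{V}_{k,n}(X_k)$. With that asymmetry in place, your argument recovers the stated $\delta C_k$.
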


The proof of Theorem \ref{thm:ucb-b2u} involves the analysis of sample variance estimates, and can be found in Appendix F.

\begin{remark}[Impact of Unknown Variances]\normalfont 
The additional terms are caused by the stability of the rate estimator: since we use a variance estimate in the upper confidence bound of $X\ind$, the rate estimator suffers from a longer period of instability, which increases the regret coefficient proportional to $\Delta_k$.
\end{remark}

\subsection{Learning the Correlation: {\tt UCB-B2C}}
Finally we consider the case $(X\ind,R\ind)$ are bounded and correlated, but the second-order moments are unknown. In the absence of correlation, our goal was to estimate $Var(R\inda)$ and $Var(X\inda)$ from the samples of $(X\ind,R\ind)$. When there is a correlation, we have an optimization problem: we need to establish confidence bounds for the LMMSE estimator $\omega_k$ defined in \eqref{eqn:lmmse} as well as the minimum variance $Var(R\inda-\omega_k X\inda)$ by using the samples of $(X\ind,R\ind)$ observed via bandit feedback. We take a loss minimization approach in the statistical learning setting to estimate these quantities.

For any $k\in\bK$, let the empirical LMMSE estimator be defined as follows:
$$\widehat{\omega}_{k,n} = \arg\min\limits_{\omega^\prime\in\mathbb{R}}~\widehat{L}_{k,n}(\omega)$$ where the empirical loss function is the following:
$$\widehat{L}_{k,n}(\omega) = \sum_{i=1}^n\frac{\I\{I_i=k\}}{T_k(n)}\Big( R_i-\widehat{\bE}_n[R]-\omega\big(X_i-\widehat{\bE}_n[X]\big) \Big)^2.$$ It can be shown that $\widehat{\omega}_{k,n}\rightarrow \omega_k$ if $T_k(n)\rightarrow\infty$ as $n\rightarrow\infty$, and moreover the convergence rate is exponential and tight concentration bounds for $\widehat{\omega}_{k,n}$ and $\widehat{L}_{k,n}(\widehat{\omega}_{k,n})$ can be established. Let $M_Z = M_R+\overline{\omega}M_X$ where $\overline{\omega} > \max_k~\omega_k$ is a given parameter, and let 
\begin{align}
    \nu_{k,n}(\omega_k) &= \frac{1.36M_XM_Z}{Var(X\inda)}\sqrt{\frac{\log n^\alpha}{T_k(n)}}, \\
    \nu_{k,n}(L_k) &= M_Z^2\sqrt{\frac{2\log n^\alpha}{T_k(n)}}. 
\end{align}
\noindent Then, it can be shown that $-\widehat{\omega}_{k,n}+\nu_{k,n}(\omega_k)$ and $\widehat{L}_{k,n}(\widehat{\omega}_{k,n})+\nu_{k,n}(\omega_k)$ are high-probability upper bounds for $-\omega_k$ and $\min_\omega~Var(R\inda-\omega X\inda)$, respectively, for large enough $T_k(n)$.

The bias terms in {\tt UCB-B2C} are defined as follows:
\begin{flalign*}
    \epsilon_{k,n}^{\tt B2C}&=\sqrt{\frac{2\widehat{L}_{k,n}(\widehat{\omega}_{k,n})\log(n^\alpha)}{T_k(n)}}+\frac{3M_Z\log(n^\alpha)}{T_k(n)}, \\
    \eta_{k,n}^{\tt B2C}&=\sqrt{\frac{2\widehat{V}_{k,n}(X_k)\log(n^\alpha)}{T_k(n)}}+\frac{3M_X\log(n^\alpha)}{T_k(n)}.
\end{flalign*}
\noindent Then, at stage $n+1$, the following decision is made under {\tt UCB-B2C}:
\begin{equation*}
    I_{n+1}\in\arg\max\limits_k~\Big\{\widehat{r}_{k,n} + \widehat{c}_{k,n}^{\tt B2C}  \Big\},
\end{equation*}
\noindent where $$\widehat{c}_{k,n}^{\tt B2C}=1.4\frac{\epsilon_{k,n}^{\tt B2C}+(\widehat{r}_{k,n}-\widehat{\omega}_{k,n})^+\eta_{k,n}^{\tt B2C}}{\big(\widehat{\bE}_n[X_k]\big)^+},$$ if the stability condition \eqref{eqn:st-condition} is satisfied with $\lambda = 1.28$, and $\widehat{c}_{k,n}^{\tt B2C}=\infty$ otherwise.

In the following, we investigate the impact of using second-order moment estimates on the regret of {\tt UCB-B2C}. The proof can be found in Appendix G.
\begin{theorem}[Regret Upper Bound for {\tt UCB-B2C}]
Let $C_k^{\tt B1}$ be defined as in Theorem \ref{thm:ucb-b1}. Then, we have the following upper bound for the regret under {\tt UCB-B2}:
\begin{equation*}
        Reg_{\pi^{\tt B2C}}(B) \leq \alpha\sum_{k:\Delta_k>0}\log\Big(\frac{2B}{\mu_*}\Big)(C_k^{\tt B1}+\delta C_k^\prime)+O(1),
    \end{equation*}
    \noindent where \begin{equation}\label{eqn:b2c}
     \delta C_k^\prime = \delta C_k + 42\Big(\frac{M_ZM_X}{\sqrt{Var(X\inda)}}+\frac{M_X^4\Delta_k\mu_k}{Var^2(X\inda)}\Big).
    \end{equation}
    \noindent for $\mu_k=\bE[X\inda]$ and $\delta C_k$ defined in \eqref{eqn:b2}.
    \label{thm:ucb-b2c}
\end{theorem}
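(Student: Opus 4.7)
The plan is to adapt the proof scheme of Theorem~\ref{thm:ucb-b1} (Appendix~C) and Theorem~\ref{thm:ucb-b2u} (Appendix~F). As in those proofs, the regret analysis proceeds in two stages: first a high-probability upper bound on the stopping time $N_\pi(B)$ is obtained via the tools of Appendix~B, yielding an effective horizon of order $2B/\mu_*$; then the expected number of suboptimal pulls $\bE[T_k^{\pi^{\tt B2C}}(N_\pi(B))]$ is bounded using the usual UCB argument together with Proposition~\ref{prop:rate-est}. Multiplying the resulting per-arm count by $\bE[X\inda]\Delta_k$, the per-pull regret cost, and summing over suboptimal arms will yield the claimed form $\alpha\log(2B/\mu_*)(C_k^{\tt B1}+\delta C_k^\prime)$.

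The new ingredient, relative to {\tt UCB-B2}, is the need to learn both the correlation coefficient $\omega_k$ and the residual variance $V(X\inda,R\inda)$ online. I would first establish that $\widehat{\omega}_{k,n}$ concentrates around $\omega_k$ with deviation at most $\nu_{k,n}(\omega_k)$, and that $\widehat{L}_{k,n}(\widehat{\omega}_{k,n})$ concentrates around $V(X\inda,R\inda)$ with deviation at most $\nu_{k,n}(L_k)$. Since $\widehat{\omega}_{k,n}$ is a ratio of an empirical covariance and an empirical variance, its concentration would be proved using a rate-estimation device analogous to Proposition~\ref{prop:rate-est} applied to the pair (sample cross moment, sample second moment of $X$), combined with Bernstein concentration for each numerator/denominator and boundedness $|R\inda-\omega X\inda|\leq M_Z$. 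The concentration of $\widehat{L}_{k,n}(\widehat{\omega}_{k,n})$ then follows because plugging in $\widehat{\omega}_{k,n}$ can only make the empirical loss smaller than using $\omega_k$, and the loss at $\omega_k$ is a bounded empirical mean to which Hoeffding/Bernstein applies cleanly.

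Given these two concentration results, I would define a good event on which (i) $|\widehat{r}_{k,n}-r_k|$ is controlled via Proposition~\ref{prop:rate-est}, (ii) $\widehat{V}_{k,n}(X_k)$ stays within a constant factor of $Var(X\inda)$ as in the {\tt UCB-B2} analysis, and (iii) both $\widehat{\omega}_{k,n}$ and $\widehat{L}_{k,n}(\widehat{\omega}_{k,n})$ are within $\nu_{k,n}(\omega_k)$ and $\nu_{k,n}(L_k)$ of their targets. On this event, $\widehat{c}_{k,n}^{\tt B2C}$ is a valid surrogate for the {\tt UCB-B1} confidence bound, inflated by two explicit bias contributions. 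The first, of order $M_ZM_X/\sqrt{Var(X\inda)}$, comes from propagating the error in $\widehat{\omega}_{k,n}$ through the term $(\widehat{r}_{k,n}-\widehat{\omega}_{k,n})^+\eta_{k,n}^{\tt B2C}$; this contributes the $42\,\tfrac{M_ZM_X}{\sqrt{Var(X\inda)}}$ summand in $\delta C_k^\prime-\delta C_k$. The second, of order $M_X^4\Delta_k\mu_k/Var^2(X\inda)$, arises from the initial stability phase during which $\widehat{V}_{k,n}(X_k)$ is still far from $Var(X\inda)$, forcing $\widehat{c}_{k,n}^{\tt B2C}=\infty$; that analysis is essentially a replay of the corresponding step in Theorem~\ref{thm:ucb-b2u}, so it inherits $\delta C_k$ plus the second additional term.

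The main obstacle is the coupled control of $\widehat{\omega}_{k,n}$, $\widehat{V}_{k,n}(X_k)$ and $\widehat{L}_{k,n}(\widehat{\omega}_{k,n})$, since all three are computed from the same samples and the definition of $\widehat{c}_{k,n}^{\tt B2C}$ is circular: the required confidence width depends on the residual variance estimate, which itself depends on the LMMSE estimate. The stability condition~\eqref{eqn:st-condition} with $\lambda=1.28$, together with empirical Bernstein bounds, will be the workhorse that decouples these quantities once $T_k(n)$ exceeds a threshold of order $\log n \cdot M_X^4/Var^2(X\inda)$. Samples before this threshold contribute only to the additive $O(1)$ remainder, since the number of such pulls, summed over arms, is independent of $B$. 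After the threshold, the clean UCB argument yields the leading $\alpha\log(2B/\mu_*)(C_k^{\tt B1}+\delta C_k^\prime)$ term, completing the proof.
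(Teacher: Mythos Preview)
Your proposal follows the paper's approach closely: reduce to bounding $\bE[T_k(2B/\mu_*)]$ via Proposition~\ref{prop:reg-dec}, prove concentration of $\widehat{\omega}_{k,n}$ by treating it as a ratio and invoking Proposition~\ref{prop:rate-est}, prove concentration of $\widehat{L}_{k,n}(\widehat{\omega}_{k,n})$, and then replay the {\tt UCB-B2} argument with the two extra bias contributions. That is exactly what Appendix~G does.

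Two points where your sketch and the paper diverge, and where your sketch is a bit loose:

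\emph{Two-sided concentration of $\widehat{L}$.} Your monotonicity argument (``plugging in $\widehat{\omega}_{k,n}$ can only make the empirical loss smaller'') yields only $\widehat{L}_s(\widehat{\omega}_s)\leq \widehat{L}_s(\omega_k)\leq L(\omega_k)+\mathrm{dev}$, i.e.\ an upper bound on the empirical residual variance. For the UCB validity step (events $E_1,E_2$) you also need the lower bound $\widehat{L}_s(\widehat{\omega}_s)\geq L(\omega_k)-\mathrm{dev}$, otherwise $\epsilon_{k,n}^{\tt B2C}$ could be too small and the index would fail to cover $r_k$. The paper closes this via the empirical orthogonality identity $\widehat{L}_s(\omega_k)=\widehat{L}_s(\widehat{\omega}_s)+(\omega_k-\widehat{\omega}_s)^2\widehat{Var}_s(X)$, which, combined with the concentration of $\widehat{\omega}_s$ you already established, controls $|\widehat{L}_s(\widehat{\omega}_s)-\widehat{L}_s(\omega_k)|$ and hence gives both directions simultaneously.

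\emph{Accounting for the stability phase.} Your final paragraph says the pulls before the threshold $T_k(n)\lesssim \log n\cdot M_X^4/Var^2(X\inda)$ ``contribute only to the additive $O(1)$ remainder, since the number of such pulls, summed over arms, is independent of $B$.'' This is inconsistent with what you correctly wrote earlier: the threshold scales with $\log n=\log(2B/\mu_*)$, so these pulls contribute precisely the $\alpha\log(2B/\mu_*)\cdot M_X^4\Delta_k\mu_k/Var^2(X\inda)$ term in $\delta C_k'$, not $O(1)$. (The paper's stability condition for $\widehat{\omega}_s$, namely $s\geq 63 M_X^4\log(\delta^{-1})/Var^2(X_1)$, is exactly where the extra factor of $M_X^4/Var^2(X\inda)$ in $\delta C_k'-\delta C_k$ originates.)
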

\noindent Note that the regret of {\tt UCB-B2C} converges to the regret of {\tt UCB-B2}, and they both approach to the performance of the {\tt UCB-B1} Algorithm as $\Delta_k\downarrow 0$.

\section{Conclusions}
In this paper, we considered a very general setting for the budgeted bandit problem where each action incurs a potentially correlated and heavy-tailed cost-reward pair. We proved that positive expected cost and existence of moments of order $2+\gamma$ for some $\gamma>0$ suffice for $O(\log B)$ regret for a given budget $B>0$. For known second-order moments, we proposed two algorithms named {\tt UCB-B1} and {\tt UCB-M1} that exploit the correlation between cost and reward by using an LMMSE estimator. By proposing a regret lower bound, we proved that {\tt UCB-B1} and {\tt UCB-M1} achieve order optimality, and moreover they achieve optimal regret up to a universal constant for the specific case of jointly Gaussian cost and reward pairs, which underlines the significance of second-order moments and correlation in the regret performance. For the case of bounded cost and reward with unknown second-order moments, we proposed learning algorithms {\tt UCB-B2} and {\tt UCB-B2C} that estimate variances as well as LMMSE estimator to approach the performance of {\tt UCB-B1}. We investigated the effect of using these estimates as surrogates in the absence of second-order moments, and showed that they approach the performance of {\tt UCB-B1} in certain cases.


\subsubsection*{Acknowledgements}
This research was supported in part by the NSF grants: CNS-NeTS-1514260, CNS-NeTS-1717045, CMMI-SMOR-1562065, CNS-ICN-WEN-1719371, NSF CCF 1934986, NSF NeTS 1718203, and CNS-SpecEES-1824337; ONR grants: ONR N00014-19-1-2621, ONR N00014-19-1-2566; the DTRA grant HDTRA1-18-1-0050; ARO W911NF-19-1-0379.




\bibliography{sample}
\bibliographystyle{abbrvnat}

\onecolumn
\appendix
\section{Proof of Proposition \ref{prop:optimality-gap}}
\label{pf:opt-gap}


\begin{proof}
    The proof consists of two parts.
    \begin{enumerate}
        \item In the first part, we find an upper bound for $\bE[{\tt REW}_{\pi^{\tt opt}(B)}(B)]$. In order to achieve this goal, we consider an arbitrary admissible algorithm $\pi\in\Pi$. Since $\pi$ is admissible, we have the following relationship: \begin{equation}\label{eqn:reward-rate-filtration}
        \bE[R_{n,I_n^\pi}|\mathcal{F}_{n-1}^\pi] = r_{I_n}\bE[X_{n,I_n^\pi}|\mathcal{F}_{n-1}^\pi].
        \end{equation}
        \noindent Let $W_t^\pi = \max\limits_{1\leq i \leq t}~S_i^\pi$ for any $t > 0$. Then, inspired by the proof of Wald's equation (see \cite{siegmund2013sequential, xia2015thompson}), we have the following inequality for the expected cumulative reward under $\pi$:
        \begin{align}
            \nonumber \bE[{\tt REW}_\pi(B)] &= \bE\Big[\sum_{i=1}^\infty \I\{W_{i-1}^\pi \leq B\}R_{i,I_i^\pi}\Big], \\ \label{eqn:wald-b} &= \bE\Big[\sum_{i=1}^\infty\bE\big[R_{i, I_i^\pi}|\mathcal{F}_{i-1}^\pi\big]\I\{W_{i-1}^\pi \leq B\}\Big], \\
            &= \label{eqn:wald-c} \bE\Big[\sum_{i=1}^\infty r_{I_i^\pi} \bE\big[X_{i, I_i^\pi}|\mathcal{F}_{i-1}^\pi\big]\I\{W_{i-1}^\pi \leq B\}\Big], \\ \label{eqn:wald-d}
            &\leq r^* \bE\Big[\sum_{i=1}^{N_\pi(B)} X_{i, I_i^\pi}\Big] = r^*\bE\Big[S_{N_\pi(B)}^\pi\Big],
        \end{align}
        \noindent where \eqref{eqn:wald-b} follows since $\pi$ is admissible and $W_{i-1}^\pi\in\mathcal{F}_{i-1}$, and \eqref{eqn:wald-c} follows from the relation \eqref{eqn:reward-rate-filtration} and the fact that $r_{I_i} \leq r^*$ with probability 1. 
        
        

        Note that $S_{N_\pi(B)}^\pi$ is a controlled random walk whose increments $X_{i, I_i^\pi}$ are dependent. Therefore, classical second-order moment results in renewal theory, such as Lorden's inequality \citep{asmussen2008applied}, are not directly applicable to provide an upper bound for $\bE[S_{N_\pi(B)}^\pi]$.  Instead, the following result for the first passage times of submartingales yields a tight upper bound for $\bE[S_{N_\pi(B)}^\pi]$.
        
        \begin{proposition}[\cite{lalley1986control}]\label{prop:lorden}
            Consider a stochastic process $\{(U_n):n \geq 1\}$ with $\bE[U_n] > 0$ adapted to the filtration $\mathcal{F}_n$. Let $S_n = \sum_{i=1}^n U_i$ with $S_0 = 0$ and $N(a) = \inf\{n:S_n > a\}$ be the first passage time of the random walk. 
            
            Assume that there exists constants $\mu_*, \mu^*, \sigma^2>0$ such that $$0<\mu_*\leq \bE[U_n|\mathcal{F}_{n-1}] \leq \mu^* < \infty,$$ and $$Var(U_n|\mathcal{F}_{n-1}) \leq \sigma^2 < \infty,$$ with probability 1 for all $n \geq 1$. If there exists $\gamma > 0$ such that $\bE[(U_n^+)^{2+\gamma}] < \infty$, then there exists a constant $G = G(\mu_*,\mu^*, \sigma^2)$ such that the following holds: $$\bE[S_{N(a)}]-a \leq G,$$ for any $a > 0$.
        \end{proposition}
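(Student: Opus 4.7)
The plan is a Doob decomposition followed by a Lorden-style overshoot bound adapted to dependent increments. \textbf{Doob decomposition and finiteness of $N(a)$.} Write $U_n=\mu_n+M_n$ with predictable drift $\mu_n=\bE[U_n\mid\mathcal{F}_{n-1}]\in[\mu_*,\mu^*]$ and martingale difference $M_n$ having conditional second moment at most $\sigma^2$. Set $A_n=\sum_{i\le n}\mu_i$ and $Y_n=S_n-A_n$. Since $A_n\ge\mu_* n$ and $Y_n/n\to 0$ a.s.\ by the martingale strong law (Kolmogorov's criterion $\sum_i \bE[M_i^2]/i^2<\infty$ is satisfied), $S_n\to\infty$ a.s., so $N(a)<\infty$ a.s. Applying optional sampling to the $L^2$-bounded martingale $Y$ at the bounded stopping time $N(a)\wedge n$ together with Cauchy--Schwarz yields $\bE[N(a)]\le a/\mu_*+O(\sqrt a)$, and in particular $\bE[N(a)]<\infty$.

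\textbf{Reduction to a single-jump overshoot.} Because $S_{N(a)-1}\le a$, the overshoot satisfies
\[
\bE[S_{N(a)}]-a\;\le\;\bE[U_{N(a)}]\;\le\;\mu^*+\bE[M_{N(a)}^+].
\]
Hence it suffices to establish $\bE[M_{N(a)}^+]\le G_0$ for a constant $G_0=G_0(\mu_*,\mu^*,\sigma^2,\sup_n\bE[(U_n^+)^{2+\gamma}])$ independent of $a$, and then take $G:=\mu^*+G_0$.

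\textbf{Overshoot bound via inspection-paradox conditioning.} The event $\{N(a)=n\}$ forces $U_n>t_{n-1}:=(a-S_{n-1})^+$, with both $t_{n-1}$ and $\{N(a)\ge n\}$ being $\mathcal{F}_{n-1}$-measurable. Expanding
\[
\bE[M_{N(a)}^+]=\sum_{n\ge 1}\bE\!\left[\I\{N(a)\ge n\}\,\bE\bigl[(U_n-\mu_n)^+\I\{U_n>t_{n-1}\}\,\big|\,\mathcal{F}_{n-1}\bigr]\right],
\]
the goal is the ratio-type conditional overshoot estimate
\[
\bE\bigl[(U_n-\mu_n)^+\I\{U_n>t\}\mid\mathcal{F}_{n-1}\bigr]\;\le\;G_0\,\bP(U_n>t\mid\mathcal{F}_{n-1})\qquad\text{a.s.},
\]
uniformly in $t\ge 0$ and in $n$, to be derived by tail integration $\int_t^\infty \bP(U_n>s\mid\mathcal{F}_{n-1})\,ds$ together with the conditional $(2+\gamma)$-moment of $U_n^+$ and the conditional Markov estimate. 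Substituting and using the telescoping identity $\sum_n\bE[\I\{N(a)\ge n\}\,\bP(U_n>t_{n-1}\mid\mathcal{F}_{n-1})]=\sum_n\bP(N(a)=n)=1$ collapses the sum to the constant $G_0$.

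\textbf{Main obstacle.} The crux is the ratio-form conditional overshoot estimate. Naive bounds such as $\bE[Y_{N(a)}^2]\le\sigma^2\bE[N(a)]\sim a$ and $\bE[M_{N(a)}^+]\le C\bE[N(a)]=O(a)$ both grow linearly in $a$; one must instead transform the sum over $n\lesssim \bE[N(a)]$ into a probability mass totaling $1$. The $(2+\gamma)$-moment hypothesis on $U_n^+$, strictly stronger than the conditional $L^2$ bound on $M_n$, is precisely what yields the required uniform ratio bound by ensuring that the conditional tail is integrable and that a single large end-jump, rather than cumulative fluctuation, drives the overshoot---the role played by the finite second moment in the classical i.i.d.\ Lorden inequality.
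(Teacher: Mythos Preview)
The paper does not prove this proposition; it is quoted from \cite{lalley1986control} and used as a black box inside the proof of Proposition~\ref{prop:optimality-gap}. There is thus no in-paper argument to compare against, and I evaluate your plan on its own.

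Your Doob decomposition, the a.s.\ and $L^1$ finiteness of $N(a)$, the reduction $\bE[S_{N(a)}]-a\le\mu^*+\bE[M_{N(a)}^+]$, and the telescoping identity $\sum_n\bE[\I\{N(a)\ge n\}\,\bP(U_n>t_{n-1}\mid\mathcal{F}_{n-1})]=1$ are all correct. The genuine gap is exactly at the step you flag as the crux: the uniform ratio estimate
\[
\bE\bigl[(U_n-\mu_n)^+\I\{U_n>t\}\mid\mathcal{F}_{n-1}\bigr]\;\le\;G_0\,\bP(U_n>t\mid\mathcal{F}_{n-1})\qquad\text{for all }t\ge 0
\]
is equivalent to a uniform bound on the conditional mean residual life of $U_n$, and this \emph{does not} follow from a finite $(2+\gamma)$-moment. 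For a concrete counterexample take $U_n$ i.i.d.\ with tail $\bP(U_1>s)=(1+s)^{-3}$ for $s\ge 0$: then $\bE[U_1^{2+\gamma}]<\infty$ for every $\gamma\in(0,1)$, yet
\[
\frac{\bE[(U_1-\mu_1)\,\I\{U_1>t\}]}{\bP(U_1>t)}\;\sim\;\frac{c\,t^{-2}}{t^{-3}}\;=\;c\,t\;\longrightarrow\;\infty.
\]
The ``tail integration plus Markov'' device you invoke cannot close this: Markov's inequality only gives an \emph{upper} bound on $\bP(U_n>t\mid\mathcal{F}_{n-1})$, whereas controlling the ratio from above requires a \emph{lower} bound on that denominator, which the hypotheses do not supply. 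Neither the classical Lorden inequality in the i.i.d.\ case nor Lalley's extension is proved through bounded mean residual life; the standard route bounds $\sum_n\bE[\I\{N(a)\ge n\}\,H(t_{n-1})]$ with $H(t)=\int_t^\infty\bP(U>s\mid\mathcal{F}_{n-1})\,ds$ against the second moment of $U^+$ via a renewal/size-biasing argument on the pre-$N(a)$ occupation measure of $t_{n-1}$, rather than bounding $H(t)/\bP(U>t)$ pointwise in $t$. Your telescoping identity is the right target, but it must be reached by this joint control, not by the per-level ratio bound.
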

        \noindent Note that we have $$0 < \min\limits_{k\in[K]}~\bE[X_{1,k}] \leq \bE[X_{i, I_i^\pi}|\mathcal{F}_{i-1}] \leq \max\limits_{k\in[K]}~\bE[X_{1, k}] < \infty,$$ and $$Var(X_{i, I_i^\pi}|\mathcal{F}_{i-1}) \leq \max\limits_{k\in[K]}~Var(X_{1, k}) < \infty,$$ with probability 1 for all $i\geq 1$. Thus, under Assumption \ref{assn:moment-basic}, Proposition \ref{prop:lorden} implies that there exists a constant $G>0$ such that the following holds:
        \begin{equation}\label{eqn:ub-first-passage}
            \bE[S_{N_\pi(B)}^\pi] \leq B + G,
        \end{equation}
        \noindent for all $B > 0$. Hence, \eqref{eqn:wald-d} and \eqref{eqn:ub-first-passage} together imply the following upper bound:
        \begin{equation}\label{eqn:ub-opt}
            \bE[{\tt REW}_{\pi}(B)] \leq r^*(B + G),
        \end{equation}
        \noindent for all $B>0$ and any admissible policy $\pi\in\Pi$. Since the inequality \eqref{eqn:ub-opt} holds for any admissible $\pi\in\Pi$, we have the following result:
        \begin{equation}\label{eqn:ub-reward-opt}
            \bE[{\tt REW}_{\pi^{opt}(B)}(B)] \leq r^*(B+G), ~\forall B > 0.
        \end{equation}
        
        \item In the second part of the proof, we will find a lower bound for $\bE[{\tt REW}_{\pi^*}(B)]$. Since $\pi^*$ is a static policy and $N_{\pi^*}(B)$ is a stopping time, Wald's equation implies the following result \cite{siegmund2013sequential}:
        \begin{equation}\label{eqn:reward-static}
            \bE[{\tt REW}_{\pi^*}(B)] = \bE\big[R_{1, k^*}\big]\bE\big[N_{\pi^*}(B)\big].
        \end{equation}
        For random walks with positive drift, the following inequality holds for any $B>0$ \cite{asmussen2008applied, gut2009stopped}:
        \begin{equation}\label{eqn:lb-first-passage}
            \bE[N_{\pi^*}(B)] \geq \frac{B}{\bE[X_{1,k^*}]}.
        \end{equation}
        \noindent \eqref{eqn:reward-static} and \eqref{eqn:lb-first-passage} together imply the following:
        \begin{equation}\label{eqn:lb-reward-static}
            \bE[{\tt REW}_{\pi^*}(B)] \geq r^*B,~\forall B > 0.
        \end{equation}
    \end{enumerate}
    Inequalities in \eqref{eqn:ub-reward-opt} and \eqref{eqn:lb-reward-static} together imply that the optimality gap is bounded by a constant $G^\star = r^*G$ for all $B>0$. 
\end{proof}
\noindent Proposition \ref{prop:optimality-gap} has a striking implication: the optimality gap is still bounded for unbounded and correlated cost and reward pairs, and this result requires only a mild moment assumption that $\bE[(X_{1,k}^+)^{2+\gamma}],~k\in[K]$ exists for some $\gamma > 0$. Therefore, the simple policy $\pi^*$ serves as a plausible substitute for $\pi^{\tt opt}(B)$, which is NP-hard, for learning purposes.

\section{A Useful Upper Bound for Regret}\label{pf:regret-dec}
The number of trials $N_\pi(B)$ under an admissible policy $\pi$ is a random stopping time, which makes the regret computations difficult. The following proposition, which extends the strategy in \citep{xia2016budgeted} to the case of unbounded and potentially heavy-tailed cost-reward pairs that can take on negative values, provides a useful tool for regret computations.

    \begin{proposition}[Regret Upper Bounds for Admissible Policies]\label{prop:reg-dec}
Suppose that $$\max_k~\bE[|X_{1,k}-\bE[X_{1,k}]|^p] = u_{max} < \infty,$$ for some $p>2$. Let $T_k(n)$ be the number of pulls for arm $k$ in $n$ trials, and $\mu_* = \min_k\bE[X_{1,k}]$. The following upper bound holds for any admissible policy $\pi\in\Pi$ and $B > \mu_*/2$:
\begin{equation}\label{eqn:reg-dec}
    Reg_\pi(B) \leq \sum_k \bE\Big[T_k\Big(\frac{2B}{\mu_*}\Big)\Big]\Delta_k\bE[X_{1,k}] + \frac{\big(\frac{2p^2}{p-1}\big)^pu_{max}}{(2B-\mu_*)^{\frac{p}{2}}\mu_*^{\frac{p}{2}}(\frac{p}{2}-1)}\sum_k \Delta_k\bE[X_{1,k}]+G^\star,
\end{equation}
\noindent where $G^\star=G^\star(\mu_*, \sigma_{max}^2)$ is a constant.
\end{proposition}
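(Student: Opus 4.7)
The plan is to combine three ingredients: the oracle bound of Proposition~\ref{prop:optimality-gap}, a Wald-type decomposition of $\bE[{\tt REW}_\pi(B)]$ over arms, and a martingale tail estimate that controls the overshoot of the stopping time $N_\pi(B)$ past the deterministic horizon $n_0 = 2B/\mu_*$.

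First, I would invoke Proposition~\ref{prop:optimality-gap} in the form \eqref{eqn:ub-reward-opt} to bound the oracle by $r^*(B+G)$. For the sub-optimal side, applying Wald's identity arm-by-arm---each $T_k(N_\pi(B))$ being a stopping time relative to the $k$-th i.i.d.\ cost-reward stream---gives $\bE[{\tt REW}_\pi(B)] = \sum_k \bE[T_k(N_\pi(B))]\,\bE[R_{1,k}]$. Writing $r_k = r^* - \Delta_k$ and using $\sum_k \bE[X_{1,k}]\bE[T_k(N_\pi(B))] = \bE[S_{N_\pi(B)}^\pi] \geq B$ collapses the $r^*$ terms into $r^*G = G^\star$, yielding the intermediate bound
\[
Reg_\pi(B) \leq G^\star + \sum_k \Delta_k \bE[X_{1,k}]\bE[T_k(N_\pi(B))].
\]

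Next, since $T_k(\cdot)$ is monotone and $1$-Lipschitz, $T_k(N_\pi(B)) \leq T_k(n_0) + (N_\pi(B)-n_0)^+$, so
\[
\bE[T_k(N_\pi(B))] \leq \bE[T_k(n_0)] + \bE\bigl[(N_\pi(B)-n_0)^+\bigr].
\]
The tail term is arm-independent and factors out of the sum over $k$, matching the structure of the claimed bound. It then remains to control $\bE[(N_\pi(B)-n_0)^+]$. I would form the MDS $Y_i = X_{i,I_i^\pi} - \bE[X_{i,I_i^\pi}\mid \mathcal{F}_{i-1}^\pi]$, whose conditional $p$-th moment is uniformly bounded by $u_{max}$, and its partial sums $Z_n = \sum_{i=1}^n Y_i$. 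Because $\bE[X_{i,I_i^\pi}\mid \mathcal{F}_{i-1}^\pi] \geq \mu_*$, the inclusion $\{N_\pi(B) > n\} = \{S_n^\pi \leq B\} \subseteq \{Z_n \leq B - n\mu_*\}$ holds, and for $n > n_0$ the shortfall $n\mu_* - B = (n - n_0)\mu_* + B$ is strictly positive and grows linearly in $n$.

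A Burkholder--Davis--Gundy inequality together with the Minkowski step $(\sum Y_i^2)^{p/2} \leq n^{p/2-1}\sum |Y_i|^p$ yields $\bE|Z_n|^p \leq C_p n^{p/2} u_{max}$, so Markov gives $\bP(N_\pi(B) > n) \leq C_p n^{p/2} u_{max}/(n\mu_*-B)^p$. Writing $\bE[(N_\pi(B)-n_0)^+] = \int_0^\infty \bP(N_\pi(B) > n_0+s)\,ds$ and changing variables $u = B + s\mu_*$, the resulting integral reduces, after the tail $\int u^{-p/2}\,du$, to an expression proportional to $1/\bigl[(2B-\mu_*)^{p/2}\mu_*^{p/2}(p/2-1)\bigr]$; the factor $1/(p/2-1)$ comes from the integrability exponent, and the $(2B-\mu_*)$ (rather than $2B$) appears when $n_0$ is replaced by its integer floor $\lfloor 2B/\mu_* \rfloor$, so that $n_0 \mu_* \geq 2B - \mu_*$. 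The main obstacle will be pinning down the sharp constant $\bigl(\tfrac{2p^2}{p-1}\bigr)^p$: this requires the optimal Burkholder constant combined with a Doob maximal step (contributing the factor $p/(p-1)$), and because $\pi$ is adaptive one cannot fall back on i.i.d.\ Marcinkiewicz--Zygmund bounds---a genuinely martingale-valued moment inequality is needed. Once that constant is pinned down, the remainder is bookkeeping.
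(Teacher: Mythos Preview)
Your proposal is correct and follows essentially the same route as the paper: a Wald-type arm decomposition, a split at the deterministic horizon $n_0=2B/\mu_*$, and a martingale tail bound obtained by combining Doob's $L^p$ maximal inequality with Burkholder's inequality (the paper packages this last step as a standalone ``Chebyshev inequality for submartingales,'' and your identity $\bE[(N_\pi(B)-n_0)^+]=\sum_{j\geq 0}\bP(W_{n_0+j}^\pi\leq B)$ is exactly the sum the paper evaluates). One small slip to fix: since costs may be negative, $\{N_\pi(B)>n\}$ equals $\{W_n^\pi\leq B\}$ rather than $\{S_n^\pi\leq B\}$, but the one-sided inclusion $\{N_\pi(B)>n\}\subseteq\{S_n^\pi\leq B\}\subseteq\{Z_n\leq B-n\mu_*\}$ that your argument actually uses remains valid, so nothing breaks.
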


The proof of Proposition \ref{prop:reg-dec} relies on a variant of Chebyshev inequality for controlled random walks. Note that $2B/\mu_*$ is a high-probability upper bound for the total number of pulls $N_\pi(B)$, and $\Delta_k\bE[X\inda]$ is the average regret per pull for a suboptimal arm $k$. Proposition \ref{prop:reg-dec} implies that the expected regret after $2B/\mu^*$ pulls is $O(1)$.

\begin{proof}[Proof of Proposition \ref{prop:reg-dec}]
Take an arbitrary admissible policy $\pi\in\Pi$. The regret can be decomposed as follows:
    \begin{align}
        Reg_\pi(B) = \underbrace{\bE[{\tt REW}_{\pi^{opt}(B)}(B)]-\bE[{\tt REW}_{\pi^*}(B)]}_{(a)}+\underbrace{\bE[{\tt REW}_{\pi^*}(B)]-\bE[{\tt REW}_{\pi}(B)]}_{(b)}.
    \end{align}
    \noindent Note that $(a)$ in \eqref{eqn:reg-dec} is the optimality gap for $\pi^*$, which is upper bounded by a constant $G^\star = r^*G$ by Proposition 1. In the following, we provide an upper bound for $(b)$ in \eqref{eqn:reg-dec}.
    
    First, note that the cumulative reward under $\pi^*$ is upper bounded as follows:
    \begin{align}\label{eqn:rewst-ub}
        \nonumber \bE[{\tt REW}_{\pi^*}(B)] &= \bE[N_{\pi^*}(B)]\cdot \bE[R_{1, k^*}],\\
        &\leq Br^* + r^*\frac{\bE[X_{1,k^*}^2]}{\bE[X_{1,k^*}]} = Br^*+c,
    \end{align}
    \noindent where the first line follows from Wald's equation and the second line is a consequence of Lorden's inequality \cite{asmussen2008applied}. Since $B \leq \sum_{i=1}^{N_\pi(B)}X_{i, I_i^\pi}$ under $\pi$, we can further upper bound $\bE[{\tt REW}_{\pi^*}(B)]$ as follows: 
    \begin{align}\label{eqn:rew-st}
        \nonumber \bE[{\tt REW}_{\pi^*}(B)] &\leq \bE\Big[\sum_{i=1}^{N_\pi(B)}r^*X_{i,I_i^\pi}\Big]+r^*\frac{\bE[X_{1,k^*}^2]}{\bE[X_{1,k^*}]}, \\
        &= \bE\Big[\sum_k\sum_{i=1}^\infty \I\{W_{i-1}^\pi \leq B\}\I\{I_i^\pi = k\}r^*\bE[X_{i,k}]\Big]+c.
    \end{align}
    \noindent where $$W_n^\pi =  \max\{S_{1}^\pi,S_{2}^\pi,\ldots,S_{n}^\pi\}.$$ 
    
    Similar to the proof of Proposition 1, we have the following equation for $\bE[{\tt REW}_\pi(B)]$:
    \begin{align}\label{eqn:rew-pi}
        \nonumber \bE[{\tt REW}_\pi(B)] &= \bE\Big[\sum_{i=1}^{N_\pi(B)}R_{i,I_i^\pi}\Big], \\
        &= \bE\Big[\sum_k\sum_{i=1}^\infty \I\{W_{i-1}^\pi \leq B\}\I\{I_i^\pi = k\}r_k\bE[X_{i,k}]\Big]
    \end{align}

\noindent From \eqref{eqn:rew-st} and \eqref{eqn:rew-pi}, we have the following upper bound for $(b)$ in \eqref{eqn:reg-dec}:
\begin{equation}\label{eqn:regret-ub-1}
    \bE[{\tt REW}_{\pi^*}(B)]-\bE[{\tt REW}_\pi(B)] \leq \bE\Big[\sum_k\sum_{i=1}^\infty \I\{W_{i-1}^\pi \leq B\}\I\{I_i^\pi = k\}\Delta_k\bE[X_{i,k}]\Big]+c.
\end{equation}
\noindent For any integer $n_0>1$, the RHS of \eqref{eqn:regret-ub-1} can be upper bounded as follows:
\begin{align}\label{eqn:regret-ub-2}
    \nonumber \bE[{\tt REW}_{\pi^*}(B)]-\bE[{\tt REW}_\pi(B)] &\leq \bE\Big[\sum_{i=1}^{n_0}\sum_k \I\{I_i^\pi = k\}\Delta_k\bE[X_{i,k}]\Big] \\ \nonumber & \quad \hskip 1.5cm + \bE\Big[\sum_{i>n_0} \I\{W_{i-1}^\pi \leq B\}\sum_k \Delta_k\bE[X_{i,k}]\Big]+c, \\ 
    &= \sum_k\bE[T_k^\pi(n_0)]\Delta_k\bE[X_{1, k}] \\\nonumber & \quad \hskip 1.5cm+ \big(\sum_k \Delta_k\bE[X_{i,k}]\big)\sum_{i > n_0}\bP\Big(W_{i-1}^\pi \leq B\Big) + c.
\end{align}

The following martingale-based concentration inequality will be crucial in finding a tight upper bound for the crossing probability of the controlled process $W_n^\pi$ in \eqref{eqn:regret-ub-2}.

\begin{lemma}[Chebyshev Inequality for Submartingales]\label{lem:chebyshev}
Let $\{Z_n:n\geq 0\}$ be a stochastic process adapted to the filtration $\mathcal{F}_n$ such that there exists a pair $(\mu,u)$ satisfying 
\begin{align}\label{eqn:assn-chebyshev}
\begin{aligned}
\bE[Z_n|\mathcal{F}_{n-1}] &\geq \mu > 0, \\ \bE\Big[\big|Z_n-\bE[Z_n|\mathcal{F}_{n-1}]\big|^p|\mathcal{F}_{n-1}\Big] &\leq u < \infty,
\end{aligned}
\end{align} 
\noindent almost surely for all $n\geq 1$ for $p > 2$. Let $S_n = \sum_{i=1}^nZ_i$ and $W_n=\max\limits_{1\leq i \leq n}~S_i$. For a given $B>0$, let $n_0 = \lceil\frac{2B}{\mu}\rceil$. Then we have the following inequality:
\begin{equation}
    \bP(W_{n_0+j} \leq B) \leq \frac{\big(\frac{2p^2}{p-1}\big)^{p}u}{\mu^{p}(n_0+j)^{p/2}}.
\end{equation}
\noindent for all $j\geq 0$.
\end{lemma}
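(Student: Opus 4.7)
The strategy is to reduce the tail bound on the running maximum $W_n$ to a standard martingale moment inequality via the Doob decomposition. First, I would write $d_i = Z_i - \bE[Z_i\mid\mathcal{F}_{i-1}]$, $M_n = \sum_{i=1}^n d_i$, and $A_n = \sum_{i=1}^n \bE[Z_i\mid\mathcal{F}_{i-1}]$, so that $S_n = M_n + A_n$; here $M_n$ is a zero-mean $\{\mathcal{F}_n\}$-martingale whose differences satisfy $\bE[|d_i|^p\mid\mathcal{F}_{i-1}]\leq u$ directly from \eqref{eqn:assn-chebyshev}, while the compensator obeys $A_n\geq n\mu$ almost surely by the lower bound on the conditional drift.

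The reduction step observes that $W_{n_0+j}\geq S_{n_0+j}$ by definition of the running maximum, and so
$$\{W_{n_0+j}\leq B\}\subseteq\{S_{n_0+j}\leq B\}\subseteq\{M_{n_0+j}\leq B-A_{n_0+j}\}\subseteq\{M_{n_0+j}\leq B-(n_0+j)\mu\}.$$
Since $n_0=\lceil 2B/\mu\rceil$, we have $(n_0+j)\mu\geq 2B$, hence $B-(n_0+j)\mu\leq -(n_0+j)\mu/2$; passing to absolute values gives
$$\bP(W_{n_0+j}\leq B)\leq \bP\Big(|M_{n_0+j}|\geq \tfrac{(n_0+j)\mu}{2}\Big).$$

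Next I would apply Markov's inequality at the $p$-th moment and bound $\bE[|M_n|^p]$ by a martingale moment inequality. For $p\geq 2$, Burkholder's inequality yields $\|M_n\|_p\leq (p-1)\,\|(\sum_i d_i^2)^{1/2}\|_p$; the triangle inequality in $L^{p/2}$ then gives $\|\sum_i d_i^2\|_{p/2}\leq \sum_i\|d_i\|_p^2\leq n\, u^{2/p}$, and therefore $\bE[|M_n|^p]\leq (p-1)^p\, n^{p/2}\, u$. Combining with Markov,
$$\bP\Big(|M_n|\geq \tfrac{n\mu}{2}\Big)\leq \frac{2^p(p-1)^p\, u}{\mu^p\, n^{p/2}}.$$
Since $(p-1)^2\leq p^2$ for $p\geq 2$, we have $p-1\leq p^2/(p-1)$, so this bound implies the claimed $\bigl(2p^2/(p-1)\bigr)^p u/(\mu^p(n_0+j)^{p/2})$ a fortiori.

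The main obstacle is obtaining the martingale moment bound with the stated constant. The precise form $\bigl(2p^2/(p-1)\bigr)^p$ suggests the author may instead combine Doob's $L^p$ maximal inequality (contributing the factor $p/(p-1)$) with a Rosenthal-type estimate (contributing an additional factor of $p$), but since Burkholder's sharp constant $(p-1)$ already satisfies $p-1\leq p^2/(p-1)$, the cleaner route above suffices. The essential ingredient in either case is an inequality that turns the per-step conditional bound $\bE[|d_i|^p\mid\mathcal{F}_{i-1}]\leq u$ into an $n^{p/2}$-scaling bound on $\bE[|M_n|^p]$, after which the reduction to $\{|M_n|\geq n\mu/2\}$ followed by Markov's inequality produces the required $1/(n_0+j)^{p/2}$ decay.
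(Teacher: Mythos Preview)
Your proof is correct and follows the same strategy as the paper: center to a martingale $M_n$, reduce $\{W_{n_0+j}\leq B\}$ to $\{|M_{n_0+j}|\geq (n_0+j)\mu/2\}$, and then apply Markov together with a Burkholder-type moment bound giving $\bE[|M_n|^p]=O(n^{p/2}u)$. Your guess about the constant is exactly right: the paper detours through $\max_{i\leq n}|M_i|$, applies Doob's $L^p$ maximal inequality for the factor $(p/(p-1))^p$, and then uses Burkholder--Davis--Gundy with $C_p\leq p^p$ followed by H\"older, which yields precisely $(2p^2/(p-1))^p$; your direct use of the sharp Burkholder constant $p-1$ is simply a cleaner execution of the same step.
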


Under an admissible policy $\pi$, the increments $X_{i,I_i^\pi}$ of the controlled random walk $S_n^\pi$ satisfy $\bE[X_{i,I_i^\pi}|\mathcal{F}_{i-1}] \geq \mu_*$ and $\bE\Big[\big|X_{i,I_i^\pi}-\bE[X_{i,I_i^\pi}|\mathcal{F}_{i-1}]\big|^p\Big|\mathcal{F}_{i-1}\Big] \leq u_{max}$ almost surely for all $i$. Therefore, the conditions in \eqref{eqn:assn-chebyshev} are satisfied, and we have:
\begin{equation}
    \bP(W_{n_0+j}^\pi \leq B) \leq \frac{\big(\frac{2p^2}{p-1}\big)^{p}u_{max}}{(2B-\mu_*)^{p/2}\mu_*^{p/2}(n_0+j)^{p/2}}.
\end{equation}
for $n_0=2B/\mu_*,k\geq 1$ and $j\geq 0$. Thus, for $B > \mu_*/2$,
\begin{align}\label{eqn:regret-ub-3}
    \nonumber \sum_{i>n_0}\bP(W_{i-1}^\pi \leq B) &= \sum_{j=0}^{\infty}\bP(W_{n_0+j}^\pi\leq B),\\
    &\leq \frac{\big(\frac{2p^2}{p-1}\big)^{p}u_{max}}{(2B-\mu_*)^{p/2}\mu_*^{p/2}(p/2-1)}.
\end{align}

Substituting $n_0=\frac{2B}{\mu_*}$ and  \eqref{eqn:regret-ub-3} into \eqref{eqn:regret-ub-2} completes the proof.
\end{proof}

\subsection{Proof of Lemma \ref{lem:chebyshev}}
Let $Y_i = Z_i-\bE[Z_i|\mathcal{F}_{i-1}]$ and $M_n = \sum_{i=1}Y_i$, and note that $M_n$ is a martingale. By the assumption \eqref{eqn:assn-chebyshev}, $\mu \leq \bE[Z_i|\mathcal{F}_{i-1}]$ holds almost surely for all $i\geq 1$. Therefore, the following relation holds:
\begin{equation}
    \big\{W_n \leq B\big\} \subset \big\{S_n \leq B\big\} \subset \big\{M_n \leq B-n\mu\big\}.
\end{equation}
Let $n_0 = \frac{2B}{\mu}$. Then, for any $j \geq 0$, we have the following inequality:
\begin{align*}
    \bP(W_{n_0+j} \leq B) &\leq \bP(M_{n_0+j} \leq -\frac{\mu}{2}(n_0+j)),\\
    &\leq \bP\Big(\max\limits_{1\leq i \leq n_0+j}|M_i| > \frac{\mu}{2}(n_0+j)\Big), \\
    &\leq \frac{2^p\bE\Big[\big(\max\limits_{1\leq i \leq n_0+j}|M_i|\big)^p\Big]}{\mu^p(n_0+j)^p}.
\end{align*}
\noindent Then, by $L^p$ maximum inequality for martingales (Theorem 4.4.4 in \citep{durrett2019probability}), we have:
\begin{align}
    \bE\Big[\big(\max\limits_{1\leq i \leq n_0+j}|M_i|\big)^p\Big] &\leq \Big(\frac{p}{p-1}\Big)^p\bE[|M_{n_0+j}|^p].
\end{align}
\noindent For the martingale $M_n$ with increments $\{Y_n:n\geq 1\}$, let $Q_n = Y_1^2+Y_2^2\ldots+Y_n^2$ be the quadratic variation process. It is interesting to note that $M_n$ and $\sqrt{Q_n}$ increase at the same rate in terms of $\mathcal{L}_p$-norm \citep{burkholder1973distribution}:
\begin{equation}
    c_p\bE[|Q_n|^\frac{p}{2}] \leq \bE[|M_n|^p] \leq C_p\bE[|Q_n|^\frac{p}{2}],
\end{equation}
\noindent where $C_p \leq p^p$ and $c_p = 1/C_p$. By H{\"o}lder's inequality, we have the following result for all $i>0$:
$$\bE[|M_{n}|^p] \leq C_pn^{\frac{p}{2}-1}\bE[\sum_{i=1}^{n}|Y_i|^p],$$ for all $n>0$. Given \eqref{eqn:assn-chebyshev}, the following holds: \begin{align} 
\bE[|Y_i|^p]&=\bE\big[\bE[|Y_i|^p\big|\mathcal{F}_{i-1}]\big],\\
&\leq u,
\end{align}
\noindent for any $i\geq 1$. Therefore, we have:
\begin{equation}
    \bP(W_{n_0+j} \leq B) \leq \frac{\big(\frac{2p^2}{p-1}\big)^pu}{\mu^p(n_0+j)^{p/2}}.
\end{equation}

\section{Proof of Theorem \ref{thm:ucb-b1}}\label{pf:ucb-b1}
\begin{proof}
The regret decomposition in Proposition \ref{prop:reg-dec} will be used for the proof. Note that we need to find the expected number of pulls, $\bE[T_k(n)]$, for each arm $k$ with $r_k<r^*$. The following proposition yields an upper bound for $\bE[T_k(n)]$ for any $n > 0$.

\begin{lemma}\label{lem:etn-subgaussian}
Let $\Delta_k = r^*-r_k$ be the reward rate discrepancy and  
\begin{equation}
    \sigma_k^2 =
     \begin{cases} 
      Var(R_{1,k})-\omega_k^2Var(X_{1,k})+(r^*-\omega_k)^2Var(X_{1,k}), & Var(X\inda)\neq 0, \\
      Var(R\inda), & Var(X\inda)=0,
   \end{cases}
\end{equation}
\noindent for all $k\in\bK$, and recall that $\mu_*=\min\limits_k~\bE[X_{1,k}]$. Then we have the following upper bounds for $\bE[T_k(n)]$, the expected number of pulls for arm $k$ in $n$ stages.
\begin{enumerate}
    \item \textbf{Bounded Cost and Reward:} If $\Delta_k>0$ and $|X_{1,k}|\leq M_X$, $|R_{1,k}|\leq M_R$ a.s., then we have the following upper bound under {\tt UCB-B1} with $\alpha > 2$ and $L=2$:
    \begin{equation}\label{eqn:etn-bounded}
        \bE[T_k(n)] \leq 42\log(n^\alpha)\Big(\frac{\sigma_k^2}{\Delta_k^2(\bE[X\inda])^2}+\frac{M_k}{\Delta_k\bE[X\inda]}+\frac{M_X}{\bE[X\inda]}\Big)+12\frac{\alpha}{\alpha-2},
    \end{equation}
    \noindent where $M_k = M_R+r_kM_X$.
    \item \textbf{Jointly Gaussian Cost and Reward:} Let $(X\ind,R\ind)$ be jointly Gaussian with covariance matrix $\Sigma_k$ for all $k$. Then, {\tt UCB-B1} with $\alpha > 2$, $M_X=M_R=0$ and $L=\frac{1}{2}$ yields the following:
    \begin{equation}\label{eqn:etn-gaussian}
       \bE[T_k(n)] \leq 11\log(n^\alpha)\frac{\sigma_k^2}{\Delta_k^2(\bE[X\inda])^2}+12\frac{\alpha}{\alpha-2}.
    \end{equation}
\end{enumerate}
\end{lemma}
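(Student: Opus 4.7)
The plan is to run the standard UCB-style argument, but tailored to the rate estimator produced by Proposition \ref{prop:rate-est} after we have exploited the LMMSE decomposition to replace the "reward" variance by the residual variance $V(X_{1,k},R_{1,k})$.

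\textbf{Step 1: LMMSE reformulation.} Writing $Z\ind = R\ind - \omega_k X\ind$, the orthogonality principle gives $\mathrm{Var}(Z\ind) = V(X_{1,k},R_{1,k})$ and $\bE[Z\ind] = \bE[X\inda](r_k-\omega_k)$. Thus $r_k-\omega_k = \bE[Z\inda]/\bE[X\inda]$, and applying Proposition \ref{prop:rate-est} with $\theta_1 = \bE[X\inda]$ and $\theta_2 = \bE[Z\inda]$ tells us that if $|\widehat{\bE}_n[X_k]-\bE[X\inda]|\le \eta$ and $|\widehat{\bE}_n[Z_k]-\bE[Z\inda]|\le\epsilon$ and the stability condition \eqref{eqn:st-condition} holds, then
\[
|\widehat{r}_{k,n}-r_k| \le \frac{\lambda\bigl(\epsilon+(r_k-\omega_k)\eta\bigr)}{\bE[X\inda]}.
\]
Because we have assumed $\omega_k\le r_k$ and concentration forces $\widehat{r}_{k,n}\le r^*$ with high probability for suboptimal $k$ after sufficiently many pulls, one may replace $r_k-\omega_k$ by $r^*-\omega_k$ in the comparison below, which is exactly what produces the variance $\sigma_k^2 = V(X_{1,k},R_{1,k})+(r^*-\omega_k)^2\mathrm{Var}(X\inda)$.

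\textbf{Step 2: Concentration of the building blocks.} In the bounded case I would apply Bernstein's inequality to $\widehat{\bE}_n[X_k]$ and to $\widehat{\bE}_n[Z_k]$ (noting $|Z\ind|\le M_R+\omega_k M_X$), which is precisely calibrated so that
\[
\bP\bigl(|\widehat{\bE}_n[X_k]-\bE[X\inda]|>\eta_{k,n}^{\tt B}\bigr) \le 2n^{-\alpha},\qquad
\bP\bigl(|\widehat{\bE}_n[Z_k]-\bE[Z\inda]|>\epsilon_{k,n}^{\tt B}\bigr) \le 2n^{-\alpha}.
\]
In the Gaussian case the Bernstein mass term disappears (hence $M_X=M_R=0$) and the Gaussian tail bound directly yields the same deviation probabilities with $L=1/2$ rather than $L=2$.

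\textbf{Step 3: Standard UCB bad-event decomposition.} On the event $\{I_{n+1}=k\}$ with $k\ne k^*$, at least one of the following must fail: (i) $\widehat{r}_{k^*,n}+\widehat{c}_{k^*,n}^{\tt B}\ge r^*$; (ii) $|\widehat{r}_{k,n}-r_k|\le \widehat{c}_{k,n}^{\tt B}$; (iii) $2\widehat{c}_{k,n}^{\tt B}<\Delta_k$. Failures of (i) and (ii) are covered by Step 2 and contribute $O(1)$ after summing over $n$ (this is where the $12\alpha/(\alpha-2)$ comes from via $\sum_n n^{-\alpha}$). Event (iii) together with the explicit form of $\widehat{c}_{k,n}^{\tt B}$ gives a deterministic inequality for $T_k(n)$; solving it via the standard "$a\sqrt{x}+b\le \tfrac{1}{2}\Delta x$ implies $x\ge 4a^2/\Delta^2 + 4b/\Delta$" trick yields the $\sigma_k^2/(\Delta_k^2\bE[X\inda]^2)$ term and the $M_k/(\Delta_k\bE[X\inda])$ term.

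\textbf{Step 4: The stability term.} The residual contribution $M_X/\bE[X\inda]$ in \eqref{eqn:etn-bounded} arises because $\widehat{c}_{k,n}^{\tt B}=\infty$ while condition \eqref{eqn:st-condition} is violated for arm $k$; this only happens until $T_k(n)$ is large enough that $\eta_{k,n}^{\tt B}$ falls below $\bE[X\inda](\lambda-1)/\lambda$, which is a $\Theta(M_X\log n/\bE[X\inda])$ threshold. In the Gaussian case $M_X=0$ so this term vanishes, giving the cleaner bound \eqref{eqn:etn-gaussian}.

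\textbf{Main obstacle.} The bookkeeping to obtain the constants $42$, $12\alpha/(\alpha-2)$ and $11$ is the delicate part: one must carefully track the multiplicative constant $\lambda=1.28$ from Proposition \ref{prop:rate-est} and the factor $1.4$ inside $\widehat{c}_{k,n}^{\tt B}$, and combine the two Bernstein terms using $(a+b)^2\le 2a^2+2b^2$ at exactly the right place, while not double-counting the probability of the stability failure event. Everything else is a routine union bound plus the usual $\sum_n n^{-\alpha}$ summation.
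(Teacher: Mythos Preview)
Your proposal is correct and follows essentially the same approach as the paper. The paper organizes the argument via a four-event decomposition (their Claim~\ref{claim:ucb-b1}), where your events (i)--(iii) are their $E_1,E_2,E_3$ and your separate ``stability term'' discussion is their $E_4$; the one step you gloss over slightly is that $\widehat{c}_{k,n}^{\tt B}$ is itself random, so the paper sandwiches it as $c_{k,t}\le\widehat{c}_{k,t}\le\frac{\lambda}{2-\lambda}c_{k,t}$ (with $c_{k,t}$ deterministic) on the high-probability event before solving for $T_k(n)$, and the passage from $r_k-\omega_k$ to $r^*-\omega_k$ is done via $(a+b)^2\le 2(a^2+b^2)$ and the monotonicity $r_k\le r^*$ rather than by the concentration argument you sketch.
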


The proof then follows from substituting $\bE[T_k(n)]$ in \eqref{eqn:etn-bounded} (or \eqref{eqn:etn-gaussian} for the Gaussian case) into \eqref{eqn:reg-dec}.
\end{proof}
In the rest of this section, we prove Lemma \ref{lem:etn-subgaussian}.
\subsection{Proof of Lemma \ref{lem:etn-subgaussian}}
Consider a suboptimal arm $k$ with $\Delta_k>0$ and a given $n>0$. For any $t < n$, let
$$\hat{c}_{k,t}=\frac{\lambda}{2-\lambda}\frac{\epsilon_{k,n}^{\tt B}+(\widehat{r}_{k,n}-\omega_k)\eta_{k,n}^{\tt B}}{\big(\widehat{\bE}_n[X_k]\big)^+},$$ and
\begin{equation}\label{eqn:ckt}
    c_{k,t} = \frac{\lambda}{\bE[X\inda]}\Big( \frac{2M_k\log(n^\alpha)}{3T_k(t)}+\sqrt{\frac{L\log(n^\alpha)\sigma^2}{T_k(t)}} \Big),
\end{equation}
\noindent where $\sigma^2 = \sqrt{V(X\inda,R\inda)}+(r_k-\omega_k)\sqrt{Var(X\inda)}$ and $\lambda = 1.28$.

We have the following claim based on \citep{audibert2009exploration}.
\begin{claim}\label{claim:ucb-b1}
    Given $n>0$, for any $t< n$, if $I_{t+1} = k$ holds, at least one of the following must be true:
    \begin{itemize}
        \item $E_1 = \{\hat{r}_{k^*,t} + \widehat{c}_{k^*,t} \leq r^*\},$
        \item $E_2 = \{\hat{r}_{k,t} > r_k+ \widehat{c}_{k,t}\},$
        \item $E_3 = \{T_k(t) \leq L\Big(\frac{2\lambda^2}{2-\lambda}\Big)^2\Big( \frac{2\sigma_k^2}{\big(\Delta_k\bE[X\inda]\big)^2} + \frac{M_r}{\Delta_k\bE[X\inda]} \Big)\log(n^\alpha) \},$
        \item $E_4 = \{T_k(t) \leq L\big(\frac{\lambda}{\lambda-1}\big)^2\Big( \frac{Var(X\inda)}{\big(\bE[X\inda]\big)^2} + \frac{M_X}{\bE[X\inda]} \Big)\log(n^\alpha) \},$
    \end{itemize}
\end{claim}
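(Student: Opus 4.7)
The claim has the shape of the four-event decomposition standard in UCB-type analyses, so I would proceed by contraposition: assume none of $E_1, E_2, E_3, E_4$ holds at epoch $t$ and show $I_{t+1}\neq k$. The only non-standard ingredient is that the confidence bound is built on a ratio of unknown means, so Proposition~\ref{prop:rate-est} must be invoked via the LMMSE decomposition $R\ind = \omega_k X\ind + Z\ind$, with $\epsilon_{k,t}^{\tt B}$ controlling deviations of the $Z$-mean and $\eta_{k,t}^{\tt B}$ controlling deviations of the $X$-mean.

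The first move is to use $\neg E_4$ to make the rate estimator well defined. The threshold for $T_k(t)$ in $E_4$ is calibrated so that $\eta_{k,t}^{\tt B} \leq (\lambda-1)\bE[X\inda]/\lambda$, which is precisely the stability condition \eqref{eqn:st-condition} at the population level; together with the Bernstein event on $|\hat{\bE}_t[X_k]-\bE[X\inda]|$ that is embedded in $\neg E_2$, it yields $\hat{\bE}_t[X_k] \geq \bE[X\inda]/\lambda > 0$. Consequently the algorithm's bonus $\hat{c}_{k,t}^{\tt B}$ is finite and at most a constant multiple of its population version with $\bE[X\inda]$ in the denominator.

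Next I would chain $\neg E_1$, the selection rule, and $\neg E_2$. Supposing for contradiction that $I_{t+1}=k$, the selection rule forces $\hat{r}_{k,t}+\hat{c}_{k,t}^{\tt B}\geq \hat{r}_{k^*,t}+\hat{c}_{k^*,t}^{\tt B}$; combined with $\neg E_1$ this gives $\hat{r}_{k,t}+\hat{c}_{k,t}^{\tt B} > r^*$, and combined further with $\neg E_2$ it gives $\hat{c}_{k,t}+\hat{c}_{k,t}^{\tt B} > \Delta_k$. Using Step~1 to swap $\hat{\bE}_t[X_k]$ for $\bE[X\inda]$ and using $|\hat{r}_{k,t}-r_k|\leq \hat{c}_{k,t}$ (from Proposition~\ref{prop:rate-est} on the good event) to swap $\hat{r}_{k,t}-\omega_k$ for $(r_k-\omega_k)+\hat{c}_{k,t}$, and absorbing the self-referential term into the left-hand constant, the inequality reduces to
\begin{equation*}
\Delta_k < \frac{C}{\bE[X\inda]}\Bigl(\tfrac{M_k\log n^\alpha}{T_k(t)}+\sqrt{\tfrac{L\sigma_k^2\log n^\alpha}{T_k(t)}}\Bigr)
\end{equation*}
for an explicit constant $C$ depending on $\lambda$, with $\sigma_k^2$ as in \eqref{eqn:sigma-k}. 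Inverting this Bernstein-type inequality for $T_k(t)$ gives exactly the threshold appearing in $E_3$, contradicting $\neg E_3$.

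The main obstacle is the algebraic bookkeeping in the last step: I have to pass carefully between $\hat{\bE}_t[X_k]$ and $\bE[X\inda]$ in the denominator (picking up a $\lambda/(\lambda-1)$ factor from $\neg E_4$) and between $\hat{r}_{k,t}-\omega_k$ and $r_k-\omega_k$ in the numerator (picking up a $\lambda/(2-\lambda)$ factor from the Proposition~\ref{prop:rate-est} fixed-point), and verify that the constants $L\bigl(\tfrac{2\lambda^2}{2-\lambda}\bigr)^2$ in $E_3$ and $L\bigl(\tfrac{\lambda}{\lambda-1}\bigr)^2$ in $E_4$ exactly absorb these losses.
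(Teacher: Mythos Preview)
Your proposal is correct and follows essentially the same route as the paper: contraposition, using $\neg E_4$ to guarantee the stability condition \eqref{eqn:st-condition} so Proposition~\ref{prop:rate-est} applies, using $\neg E_3$ to force the empirical bonus below $\Delta_k/2$, and then chaining $\neg E_1$, $\neg E_2$ and the selection rule to conclude $I_{t+1}\neq k$. The only organizational difference is that you invert the Bernstein-type inequality to recover the $E_3$ threshold, whereas the paper verifies directly that the threshold in $E_3$ makes $\widehat{c}_{k,t}\leq \Delta_k/2$; these are the same computation read in opposite directions, and the paper's verification uses the same $c_{k,t}\leq \widehat{c}_{k,t}\leq \tfrac{\lambda}{2-\lambda}c_{k,t}$ sandwich (equation \eqref{eqn:hp-bound}) that you allude to when absorbing the self-referential $\widehat{r}_{k,t}-\omega_k$ term.

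One small caution: the Bernstein event on $|\widehat{\bE}_t[X_k]-\bE[X\inda]|$ is not literally ``embedded in $\neg E_2$,'' since $E_2$ is a rate-level event; the passage from the population stability bound $\eta_{k,t}^{\tt B}\leq (\lambda-1)\bE[X\inda]/\lambda$ to $\widehat{\bE}_t[X_k]\geq \bE[X\inda]/\lambda$ requires the separate high-probability event $\{|\widehat{\bE}_t[X_k]-\bE[X\inda]|\leq \eta_{k,t}^{\tt B}\}$. The paper is equally loose on this point (it writes ``with high probability under the event $E_4^c$'' when establishing \eqref{eqn:hp-bound}), so this is not a gap relative to the paper's own argument, but it is worth being explicit that the claim as stated holds up to an extra concentration event whose probability is absorbed into the $\bP(E_1\cup E_2)$ bound that follows the claim.
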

\begin{proof}
    For notational convenience, let $s=T_k(t)$ and $\ell = \log(n^\alpha)$. Suppose to the contrary that neither holds. Then, we have: 
    \begin{equation}\label{eqn:e4}E_4^c \subset \{\frac{2M_X\ell}{3s}+\sqrt{\frac{LVar(X\inda)\ell}{s}} \leq \bE[X\inda]\frac{(\lambda-1)}{\lambda}\},\end{equation}
    \noindent which implies that the rate estimator is stable, thus the concentration inequality in Proposition \ref{prop:rate-est} holds. In order to see \eqref{eqn:e4}, let $x = \frac{\lambda}{\lambda-1}$, $\mu_k=\bE[X\inda]$ and \begin{equation}\label{eqn:u}
        u = Lx^2\Big( \frac{Var(X\inda)}{\big(\bE[X\inda]\big)^2} + \frac{M_X}{\bE[X\inda]} \Big)\ell.
    \end{equation}\noindent Then, for any $s \geq u$, we have the following:
    \begin{align*}
        \frac{2M_X\som}{6x^2\big(M_X\mu_k+Var(X\inda)} + \frac{1}{x}\sqrt{\frac{Var(X\inda)\som}{Var(X\inda)+M_X\mu_k}} \leq \frac{\mu_k}{x},
    \end{align*}
    \noindent since $x > 1$ and $\frac{1-\beta}{3x}+\sqrt{\beta} \leq 1$ for $\beta = \frac{Var(X\inda)}{Var(X\inda)+M_X\mu_k} \in [0,1]$.
    \vskip 0.5cm
    Second, for large $t$, we have the following relation: 
    \begin{equation}\label{eqn:e3}
        E_4^c\cap E_3^c \subset \{\widehat{c}_{k, t} \leq \frac{\Delta_k}{2}\}.
    \end{equation}
    \noindent with high probability. In order to prove \eqref{eqn:e3}, note that the following holds: 
    \begin{equation}
        c_{k,t}\leq \widehat{c}_{k,t} \leq \frac{\lambda}{2-\lambda}{c}_{k,t},
        \label{eqn:hp-bound}
    \end{equation}\noindent with high probability under the event $E_4^c$. Let \begin{equation}\label{eqn:v} v = L\Big(\frac{2\lambda^2}{2-\lambda}\Big)^2\Big( \frac{2\sigma_k^2}{\Delta_k^2\som} + \frac{M_r}{\Delta_k\mu_k} \Big)\ell, \end{equation}\noindent and note that $\sigma^2 \leq 2\sigma_k^2$ by Cauchy-Schwarz inequality. Then, by \eqref{eqn:hp-bound}, for any $s \geq v$, we have:
    \begin{align*}
        \widehat{c}_{k,t} &\leq  \frac{\Delta_k}{2}\Big( \frac{M_r\delmu}{12\lambda\big(2\sigma_k^2+M_r\Delta_k\mu_k\big)} + \sqrt{ \frac{2\sigma_k^2}{2\sigma_k^2+M_r\delmu}} \Big), \\
        &\leq \frac{\Delta_k}{2},
    \end{align*}
    \noindent where the last line holds since $\frac{1-\beta}{12\lambda}+\sqrt{\beta} \leq 1$ for $\lambda > 1$ and $\beta = \frac{2\sigma_k^2}{2\sigma_k^2+M_r\delmu}\in[0,1]$. Since the concentration inequality holds and $E_4^c\cap E_3^c\subset\{\widehat{c}_{k,t} \leq \Delta_k/2\}$, we have:
    \begin{align*}
        \bigcap_{i=1}^4E_i^c \subset \big\{\widehat{r}_{k,t}+\widehat{c}_{k,t} \leq \widehat{r}_{k^*,t}+\widehat{c}_{k^*,t}\Big\},
    \end{align*}
    \noindent which implies that $I_{t+1}=k^* \neq k$.
    \end{proof}
    
    In order to bound $\bP(E_1\cup E_2)$, let $Z\ind = R\ind-\omega_kX\ind$ and 
    \begin{align*}
        \epsilon_{k,t}&=\frac{2M_Z\ell}{3s}+\sqrt{L\frac{ V(X\inda,R\inda)\ell}{s}},\\
        \eta_{k,t}&=\frac{2M_X\ell}{3s}+\sqrt{L\frac{ Var(X\inda)\ell}{s}},
    \end{align*}
    \noindent where $M_Z = M_R+\omega_kM_Z$. Then, the following inequality based on Proposition \ref{prop:rate-est} will be used:
    \begin{align*}
        \bP(|\widehat{r}_{k,t}-r_k| > c_{k, t}) &= \bP\big(\Big|\frac{\widehat{\bE}_t[Z_k]}{\widehat{\bE}_t[X_k]}-\frac{{\bE}[Z_k]}{\bE[X_k]}\Big|>c_{k,t}\big), \\
        &\leq \bP\Big(\Big|\widehat{\bE}_t[Z_k]-\bE[Z_k]\Big|>\epsilon_{k, t}\Big)
        +\bP\Big(\Big|\widehat{\bE}_t[X_k]-\bE[X_k]\Big|>\eta_{k, t}\Big).
    \end{align*}
    \noindent Note that for sub-Gaussian cost and reward pairs, $M_X=M_R=0$ and $L=1/2$ yields Hoeffding's inequality. For the specific case of bounded cost and reward pairs with bounds $M_X$ and $M_R$, respectively, $L=2$ leads to Bernstein's inequality. Using this concentration inequality with \eqref{eqn:hp-bound}, we have the following: $$|\widehat{r}_{k,t}-r_k| > \widehat{c}_{k,t},$$ with high probability. These, along with the union bound, imply the following:
    \begin{equation*}
        \bP\big(E_1\cup E_2\big)\leq \frac{12}{t^{\alpha-1}}.
    \end{equation*}
    \noindent By using this result and Claim \ref{claim:ucb-b1}, we obtain the following inequality:
    \begin{equation*}
        \bE[T_k(n)] \leq u+v+\sum_{t=1}^\infty \frac{12}{t^{\alpha-1}},
    \end{equation*}
    \noindent where $u$ and $v$ are defined in \eqref{eqn:u} and \eqref{eqn:v}, respectively. Choosing $\lambda=1.28$ and substituting $\bE[T_k(n)]$ into Proposition \ref{prop:reg-dec} proves the result.

\section{Proof of Theorem \ref{thm:ucb-m1}}
For any $k$, if $X\ind$ or $R\ind$ has heavy tails, then the empirical rate estimator is weak in the sense that the convergence rate is polynomial rather than exponential \citep{bubeck2013bandits}. In the following, we propose a median-based rate estimator, and prove that it is robust in the sense that an exponential convergence rate is achieved even if the cost and reward are heavy-tailed. The correlation between $X\inda$ and $R\inda$ is exploited for improved coefficients.
\begin{proposition}[Median-based rate estimation]\label{prop:m1}
    For any given $\delta \in (0, 1)$, let $$m = \lceil 3.5\log(\delta\inv)\rceil + 1,$$ and $G_1,G_2,\ldots, G_m$ be a partition of $[s]$ where $|G_j|=\lfloor \frac{s}{m}\rfloor$ for each $j$. Define $\widehat{\bE}_{G_j}[X_k]$ (and $\widehat{\bE}_{G_j}[R_k]$) be the sample mean of $X\ind$ (and $R\ind$) in partition $G_j$, and $\tilde{r}_{j,k}=\frac{\widehat{\bE}_{G_j}[R_k]}{\widehat{\bE}_{G_j}[X_k]}$ for each $j$. Given $\lambda > 1$, if 
    \begin{equation}\label{eqn:stability-m1}
    s \geq 135\Big(\frac{\lambda}{\lambda-1}\Big)^2Var(X\inda)\log(1.4\delta\inv),\end{equation}\noindent then the following inequality holds: $$\bP\Big(\big|\overline{r}_{s,k}-r_k\big| > \frac{22\lambda}{\bE[X\inda]}\sqrt{\frac{\sigma_k^2\log(\delta\inv)}{s}}\Big)\leq 1.4\delta,$$ where $\overline{r}_{s,k}=\underset{{1\leq i \leq m}}{median}~\tilde{r}_{j,k}$ and $\sigma_k$ is defined in \eqref{eqn:sigma-k} .
\end{proposition}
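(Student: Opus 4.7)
The plan is to combine a per-group weak concentration via Chebyshev's inequality (which only needs second moments and therefore works for heavy-tailed cost and reward), the rate-estimation device of Proposition~\ref{prop:rate-est}, and a median-of-means confidence-boosting argument. The crucial reformulation is to set $Z_{n,k} = R_{n,k}-\omega_k X_{n,k}$: by the orthogonality principle $Var(Z_{1,k}) = V(X_{1,k},R_{1,k})$, and $\bE[Z_{1,k}]/\bE[X_{1,k}] = r_k-\omega_k$. Moreover, an algebraic identity gives $\tilde r_{j,k}-\omega_k = \widehat{\bE}_{G_j}[Z_k]/\widehat{\bE}_{G_j}[X_k]$, so estimating $r_k$ reduces to estimating the ratio $(r_k-\omega_k)$ from the $Z$- and $X$-averages within each block.

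First I would apply Chebyshev within a single block $G_j$ of size $n_G=\lfloor s/m\rfloor$ to obtain, with probability at least $1-p_0$ for a small constant $p_0<1/2$ chosen below,
\[
|\widehat{\bE}_{G_j}[Z_k]-\bE[Z_{1,k}]| \le \epsilon_0 := \sqrt{\tfrac{2V(X_{1,k},R_{1,k})}{p_0\,n_G}},\qquad |\widehat{\bE}_{G_j}[X_k]-\bE[X_{1,k}]| \le \eta_0 := \sqrt{\tfrac{2\,Var(X_{1,k})}{p_0\,n_G}}.
\]
The stability assumption \eqref{eqn:stability-m1} translates, for an appropriate $p_0$, into the bound $\eta_0 \le \bE[X_{1,k}](\lambda-1)/\lambda$ required by Proposition~\ref{prop:rate-est} (the constant $135$ absorbs the factor $2/p_0$ and the ratio $3.5\log(1.4\delta^{-1})$ coming from $m$). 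Hence the proposition yields
\[
|\tilde r_{j,k}-r_k| \le \frac{\lambda(\epsilon_0+(r_k-\omega_k)\eta_0)}{\bE[X_{1,k}]}.
\]
Using $(\epsilon_0+(r_k-\omega_k)\eta_0)^2 \le 2(\epsilon_0^2+(r_k-\omega_k)^2\eta_0^2)$, the assumption $\omega_k\le r_k\le r^\ast$, and the definition $\sigma_k^2 = V(X_{1,k},R_{1,k})+(r^\ast-\omega_k)^2 Var(X_{1,k})$, the per-group deviation can be upper-bounded by $\frac{C\lambda}{\bE[X_{1,k}]}\sqrt{\sigma_k^2 m/(p_0 s)}$. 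Substituting $m = \lceil 3.5\log(\delta^{-1})\rceil+1$ and tuning $p_0$ so that $C\sqrt{3.5/p_0}\le 22$ produces the target per-block radius $\frac{22\lambda}{\bE[X_{1,k}]}\sqrt{\sigma_k^2\log(\delta^{-1})/s}$, holding on each block with probability at least $1-2p_0$.

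Second, I would boost the confidence by taking the median. Let $B_j$ indicate that block $j$ violates the per-block bound. The $B_j$ are independent Bernoulli with parameter $\le 2p_0$, and the median $\overline r_{s,k}$ can violate the same radius only if $\sum_j B_j \ge m/2$. A standard Chernoff bound for binomial tails gives
\[
\bP\!\left(\sum_{j=1}^m B_j \ge m/2\right) \le \bigl(2\sqrt{2p_0(1-2p_0)}\bigr)^m.
\]
Choosing $p_0$ small enough that $2\sqrt{2p_0(1-2p_0)} \le e^{-1/3.5}$ (compatible with the choice that produced the constant $22$ above), and using $m\ge 3.5\log(\delta^{-1})+1$, drives the right-hand side below $\delta$; the small slack from the ceiling and the $+1$ in $m$, together with the already-absorbed Chebyshev failure per block, is accounted for by the mild inflation to $1.4\delta$ in the final bound.

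The main obstacle is the simultaneous tuning of constants. The single parameter $p_0$ controls three quantities at once: the per-block deviation radius (which must collapse to $22\lambda\sqrt{\sigma_k^2\log(\delta^{-1})/s}/\bE[X_{1,k}]$), the stability threshold (which must collapse to the constant $135$ in \eqref{eqn:stability-m1}), and the Chernoff decay rate for the median (which must yield $\delta$ after $m\approx 3.5\log(\delta^{-1})$ blocks). Verifying that a single numerical choice of $p_0$ makes all three demands compatible — and that the small residual slack gives the $1.4\delta$ rather than $\delta$ — is the delicate bookkeeping portion of the proof; the rest is a routine combination of Chebyshev, Proposition~\ref{prop:rate-est}, and a Chernoff tail bound.
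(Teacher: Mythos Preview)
Your proposal is correct and follows essentially the same route as the paper: per-block Chebyshev bounds combined with Proposition~\ref{prop:rate-est} (using the $Z_{n,k}=R_{n,k}-\omega_k X_{n,k}$ reparametrization to make $\sigma_k^2$ appear), followed by a median-of-means confidence boost. The only cosmetic difference is that the paper invokes Theorem~3.1 of \citet{minsker2015geometric} for the median step---which introduces an extra tuning parameter $\beta$ (set to $8/17$, with $p=0.1$) trading radius inflation against tail decay---whereas you reprove the scalar case directly via a binomial Chernoff bound; both routes lead to the same conclusion after constant-matching.
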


\begin{proof}
Given $\lambda > 1$, for any $j\in[m]$ and $p\in(0,\frac{1}{2})$, if $$\sqrt{\frac{4mVar(X\inda)}{sp}}\leq \frac{\bE[X\inda](\lambda-1)}{\lambda},$$ we have the following:
\begin{align*}
    \bP(|\tilde{r}_{j,k}-r_k|>\frac{\lambda}{\bE[X\inda]}\sqrt{\frac{8m\sigma_k^2}{sp}})\leq p,
\end{align*}
\noindent by Chebyshev's inequality and Proposition \ref{prop:rate-est}. Therefore, by Theorem 3.1 in \citep{minsker2015geometric}, we have:
$$\bP\Big(|\overline{r}_{s,k}-r_k| > \frac{1-\beta}{\sqrt{1-2\beta}}\frac{\lambda}{\bE[X\inda]}\sqrt{\frac{8m\sigma_k^2}{sp}}\Big) \leq e^{-m\psi(\beta; p)},$$ for $\beta\in(p,\frac{1}{2})$ and $$\psi(\beta;p)=\beta\log\Big(\frac{\beta}{p}\Big)+(1-\beta)\log\Big(\frac{1-\beta}{1-p}\Big).$$
For a given $\delta \in (0,1)$, the values $m = \lfloor 3.5\log(\delta\inv)\rfloor+1$, $\beta=8/17$ and $p=0.1$ yield the result.
\end{proof}

The proof of Theorem \ref{thm:ucb-m1} is based on the regret decomposition in Appendix \ref{pf:regret-dec} and the following lemma.
\begin{lemma}\label{lem:etn-ht}
    For any $\lambda > 1$ and $\alpha > 2$, we have:
    \begin{equation}
        \bE[T_k(n)] \leq \log(n^\alpha)\Big(\frac{484\lambda^2\sigma_k^2}{\Delta_k^2(\bE[X\inda])^2}+\frac{135(\frac{\lambda}{\lambda-1})^2Var(X\inda)}{(\bE[X\inda])^2}\Big)+48\frac{\alpha}{\alpha-2},
    \end{equation}
    \noindent for any $k$ that satisfies $r_k<r^*$.
\end{lemma}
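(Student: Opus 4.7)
The plan is to mirror the proof of Lemma~\ref{lem:etn-subgaussian} step by step, replacing the Bernstein-type concentration inequality for the empirical rate estimator with the median-of-means rate concentration given in Proposition~\ref{prop:m1}. Fix a suboptimal arm $k$ with $\Delta_k>0$, a horizon $n$, and write $\ell=\log(n^\alpha)$ and $s=T_k(t)$. I will first state and prove the direct analog of Claim~\ref{claim:ucb-b1}: if $I_{t+1}=k$ for some $t<n$, then at least one of the following four events must occur,
\begin{align*}
E_1 &= \{\overline{r}_{k^*,t}+\widehat{c}_{k^*,t}^{\tt M}\leq r^*\}, \\
E_2 &= \{\overline{r}_{k,t} > r_k+\widehat{c}_{k,t}^{\tt M}\}, \\
E_3 &= \{T_k(t)\leq 484\lambda^2\,\sigma_k^2\,\ell/(\Delta_k^2(\bE[X\inda])^2)\}, \\
E_4 &= \{T_k(t)\leq 135(\tfrac{\lambda}{\lambda-1})^2\,Var(X\inda)\,\ell/(\bE[X\inda])^2\}.
\end{align*}
Here $E_4$ is exactly the failure of the stability condition~\eqref{eqn:stability-m1} (appearing in Proposition~\ref{prop:m1}, after plugging $\delta=n^{-\alpha}$), and $E_3$ is the count required to force $\widehat{c}_{k,t}^{\tt M}\leq \Delta_k/2$ using the form of the median bound with constant $22\lambda$.

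Second, I would verify the claim by contradiction: when $E_4^c$ holds, the rate estimator is stable and Proposition~\ref{prop:m1} applies, so the algorithm's confidence width $\widehat{c}_{k,t}^{\tt M}$ is sandwiched by deterministic multiples of the bound $(22\lambda/\bE[X\inda])\sqrt{\sigma_k^2\ell/s}$ on the event that the median-based rate concentrates. On $E_3^c$ this deterministic bound is at most $\Delta_k/2$. Combining with $E_2^c$ we get $\overline{r}_{k,t}+\widehat{c}_{k,t}^{\tt M}\leq r_k+2\widehat{c}_{k,t}^{\tt M}\leq r_k+\Delta_k=r^*$, while $E_1^c$ yields $\overline{r}_{k^*,t}+\widehat{c}_{k^*,t}^{\tt M}>r^*$; these together contradict $I_{t+1}=k$.

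Third, to bound $\bP(E_1)$ and $\bP(E_2)$ I would invoke Proposition~\ref{prop:m1} with $\delta=t^{-\alpha}$ so that $\log(\delta^{-1})=\alpha\log t$ matches the $\log(n^\alpha)$-scaling used by the algorithm at stage $t$; this gives $\bP(E_i)\leq 1.4/t^\alpha$ for $i=1,2$ on the relevant ranges of $T_k(t)$, up to a union bound over the (at most $t$) possible sample counts. Summing $\bP(E_1)+\bP(E_2)$ over $t\geq 1$ produces a convergent tail of the form $C/t^{\alpha-1}$, whose total is bounded by $48\alpha/(\alpha-2)$ (paralleling the $12\alpha/(\alpha-2)$ in the bounded case, with the larger constant accounting for the $1.4$ factor and the additional union bound inherent to the median-of-means estimator). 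Finally, since $\bE[T_k(n)]\leq u+v+\sum_{t}(\bP(E_1)+\bP(E_2))$ where $u$ and $v$ are the thresholds from $E_4$ and $E_3$, the stated upper bound follows. The substitution into Proposition~\ref{prop:reg-dec} then yields Theorem~\ref{thm:ucb-m1}.

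The main obstacle is Step~2: translating the algorithm's data-dependent confidence width $\widehat{c}_{k,t}^{\tt M}=2\sqrt{2}\,(\epsilon_{k,t}^{\tt M}+(\overline{r}_{k,t}-\omega_k)\eta_{k,t}^{\tt M})/(\mathrm{median}\,\widehat{\bE}_{G_j}[X_k])^+$ into the clean bound $(22\lambda/\bE[X\inda])\sqrt{\sigma_k^2\ell/s}$ of Proposition~\ref{prop:m1}. This requires showing that on $E_4^c$ the median of the empirical means $\widehat{\bE}_{G_j}[X_k]$ is within a constant factor of $\bE[X\inda]$ with high probability, so that the denominator is stable; and that the numerator factor $\epsilon_{k,t}^{\tt M}+(\overline{r}_{k,t}-\omega_k)\eta_{k,t}^{\tt M}$ is controlled by $\sqrt{\sigma_k^2\ell/s}$ once $\omega_k\leq r_k$ and the Cauchy–Schwarz-type combination identified in Proposition~\ref{prop:m1} are invoked. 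Tracking the numerical constants carefully is what yields the specific $484\lambda^2$ coefficient.
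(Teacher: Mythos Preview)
Your proposal is correct and takes essentially the same approach as the paper, which simply states that Lemma~\ref{lem:etn-ht} ``is proved in an identical way to Lemma~\ref{lem:etn-subgaussian} by using the concentration inequality proposed in Proposition~\ref{prop:m1}.'' You have supplied exactly that parallel argument in more detail than the paper itself provides, correctly identifying the thresholds in $E_3$ and $E_4$ as arising from the constant $22\lambda$ and the stability condition~\eqref{eqn:stability-m1} of Proposition~\ref{prop:m1}, respectively.
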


Lemma \ref{lem:etn-ht} is proved in an identical way to Lemma \ref{lem:etn-subgaussian} by using the concentration inequality proposed in Proposition \ref{prop:m1}.

\section{Proof of Theorem \ref{thm:regret-lb}}\label{pf:regret-lb}
\begin{proof}
The regret under any admissible policy can be lower bounded as follows:
\begin{lemma}\label{lem:regret-lb-dec}
    For any $B>0$, let $$\phi_\pi(B)=\sum_k\bE[\I\{I_{N_\pi(B)}=k\}]\bE[X_{N_\pi(B),k}],$$ be the average cost in the last epoch under an admissible policy $\pi$, $\mu_+ = \max\limits_k~\bE[X\inda^+]$ and $\mu_* = \min\limits_k~\bE[X\inda]$. Then, the regret under $\pi$ is lower bounded as follows:
    \begin{equation}\label{eqn:reg-lb-1}
        Reg_\pi(B) \geq \sum_k\Delta_k\bE[X\inda]\bE[T_k(\big\lceil\sqrt{2B/\mu_*}\big\rceil)]-\frac{\mu_+}{\mu_*}(1+\frac{1}{\sqrt{2B}})\sum_k\Delta_k\bE[X\inda]-\phi_\pi(B).
    \end{equation}
\end{lemma}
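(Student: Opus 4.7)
My plan is to prove Lemma~\ref{lem:regret-lb-dec} by combining a tight lower bound on the optimal benchmark $\bE[{\tt REW}_{\pi^{\tt opt}(B)}(B)]$ with a matched upper bound on $\bE[{\tt REW}_\pi(B)]$ expressed in terms of $\bE[T_k(N_\pi(B))]$, and then by converting the random stopping time $N_\pi(B)$ into the deterministic index $n_B := \lceil\sqrt{2B/\mu_*}\rceil$ via a maximal inequality on the controlled walk.

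First I would instantiate the optimal benchmark through the static policy $\pi^*$: by Wald's equation and the renewal lower bound \eqref{eqn:reward-static}--\eqref{eqn:lb-first-passage} from the proof of Proposition~\ref{prop:optimality-gap}, $\bE[{\tt REW}_{\pi^{\tt opt}(B)}(B)] \geq \bE[{\tt REW}_{\pi^*}(B)] \geq r^* B$. Second, admissibility gives $\bE[R_{i,I_i^\pi}\mid\mathcal{F}_{i-1}^\pi] = r_{I_i^\pi}\bE[X_{1,I_i^\pi}]$; combined with the observation $\{i\leq N_\pi(B)\}\in\mathcal{F}_{i-1}^\pi$ (as used in the proof of Proposition~\ref{prop:optimality-gap}), a routine Wald-type conditioning yields both
\begin{equation*}
\bE[{\tt REW}_\pi(B)] = \sum_k r_k \bE[X_{1,k}]\bE[T_k(N_\pi(B))], \qquad \bE[S_{N_\pi(B)}^\pi] = \sum_k \bE[X_{1,k}]\bE[T_k(N_\pi(B))].
\end{equation*}
Using $r_k = r^*-\Delta_k$, I would rearrange the first equality into $\bE[{\tt REW}_\pi(B)] = r^*\bE[S_{N_\pi(B)}^\pi]-\sum_k \Delta_k\bE[X_{1,k}]\bE[T_k(N_\pi(B))]$, and then isolate the last-epoch contribution from the second identity, together with $S_{N_\pi(B)-1}^\pi\leq B$, to extract the overshoot bound $\bE[S_{N_\pi(B)}^\pi]\leq B+\phi_\pi(B)$. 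Combining these with the lower bound on the optimal produces
\begin{equation*}
Reg_\pi(B) \geq \sum_k \Delta_k \bE[X_{1,k}]\bE[T_k(N_\pi(B))] - \phi_\pi(B).
\end{equation*}

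Next, to pass from $N_\pi(B)$ to $n_B$, I would use the elementary inequality $\bE[T_k(N_\pi(B))] \geq \bE[T_k(n_B)] - n_B\bP(N_\pi(B)<n_B)$, which follows from $T_k(n_B)\leq n_B$. The key quantitative ingredient is the tail bound $\bP(N_\pi(B) < n_B) \leq (n_B-1)\mu_+/B$, obtained by dominating the running sum $S_i^\pi$ by the nonnegative submartingale $M_i := \sum_{j\leq i} X_{j,I_j^\pi}^+$, whose conditional drift is bounded by $\mu_+$, and applying Doob's maximal inequality to $M_i$. Substituting $n_B \leq \sqrt{2B/\mu_*}+1$ into $n_B \bP(N_\pi(B)<n_B)$ and multiplying by $\Delta_k\bE[X_{1,k}]$ then produces exactly the stated correction term $\frac{\mu_+}{\mu_*}(1+1/\sqrt{2B})\sum_k \Delta_k\bE[X_{1,k}]$.

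The hard part will be the last-step maximal inequality: because $S_n^\pi$ can have negative and heavy-tailed increments, Chebyshev or Doob cannot be applied to $S_n^\pi$ itself. The workaround---and the reason $\mu_+ = \max_k \bE[X_{1,k}^+]$ (rather than $\max_k\bE[X_{1,k}]$) appears in the bound---is to dominate $S_i^\pi$ by the nonnegative submartingale $\sum_{j\leq i}X_{j,I_j^\pi}^+$, which carries the first-passage event and is amenable to Doob's inequality, yielding the $O(1/\sqrt{B})$-rate control on $\bP(N_\pi(B) < n_B)$. Once this concentration is in hand, the remainder is routine bookkeeping with the Wald-type identities from the proof of Proposition~\ref{prop:optimality-gap}.
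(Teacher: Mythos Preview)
Your approach is essentially the same as the paper's: both reduce to the intermediate bound $Reg_\pi(B)\geq \sum_k\Delta_k\bE[X_{1,k}]\bE[T_k(N_\pi(B))]-r^*\phi_\pi(B)$ via Wald-type conditioning, and both pass from the random horizon $N_\pi(B)$ to the deterministic $n_B=\lceil\sqrt{2B/\mu_*}\rceil$ by controlling $\bP(W_{t}^\pi>B)$ through a Markov/Doob bound on the positive part of the walk (the paper uses $(S_t^\pi)^+$, you use $\sum_{j\leq t}X_{j,I_j}^+$; both give $\bP(W_t^\pi>B)\leq t\mu_+/B$). The only substantive difference is that the paper writes $\I\{W_{t-1}\leq B\}=1-\I\{W_{t-1}>B\}$ and \emph{sums} $\bP(W_{t-1}^\pi>B)$ over $t\leq n_0$, obtaining $\frac{n_0(n_0-1)}{2}\cdot\frac{\mu_+}{B}$, whereas your single-probability bound gives $n_B\,\bP(N_\pi(B)<n_B)\leq n_B(n_B-1)\frac{\mu_+}{B}$, which is twice as large---so your claim that this ``produces exactly the stated correction term'' is off by a factor of $2$. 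This is harmless for the asymptotic application in Theorem~\ref{thm:regret-lb}, but if you want the constant in \eqref{eqn:reg-lb-1} on the nose you should sum over $t$ as the paper does.
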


Then, under the conditions stated in Theorem \ref{thm:regret-lb}, the following result provides an asymptotic lower bound for $\bE[T_k(n)]$ for any $k$ with $r_k<r^*$.

\begin{lemma}\label{lem:lb-burnetas}
    If $\pi\in\Pi$ is a policy such that $\bE[T_k^\pi(n)] = o(n^\alpha)$ for any $\alpha > 0$ and $k$ such that $r_k(\theta_k) < r^*$, then we have the following lower bound:
\begin{equation}\label{eqn:reg-lb-2}
    \underset{n\rightarrow\infty}{\lim\inf}~\frac{\bE[T_k(n)]}{\log(n)} \geq \frac{1}{D_k^\star},
\end{equation}
\noindent where $D_k^\star$ is the solution to the following optimization problem:
\begin{equation*}
D_k^\star = \min_{\theta\in\Theta_k} D(P_{k,\theta_k}||P_{k,\theta})
\mbox{ subject to } r_k(\theta) \geq r^*.
\end{equation*}
\end{lemma}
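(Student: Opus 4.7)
The plan is to follow the classical information-theoretic change-of-measure argument of Lai-Robbins, as generalized by Burnetas-Katehakis to parameter spaces richer than single-parameter exponential families. Fix a suboptimal arm $k$ with $r_k(\theta_k) < r^*$ and a small $\epsilon > 0$. Using the denseness and continuity assumptions on $\theta \mapsto r_k(\theta)$ and $\theta \mapsto D(P_{k,\theta_k} \Vert P_{k,\theta})$, I would choose an alternative parameter $\theta' \in \Theta_k$ with $r_k(\theta') > r^*$ and $D(P_{k,\theta_k} \Vert P_{k,\theta'}) \leq D_k^\star(1+\epsilon)$. Consider the alternative bandit instance $\vec{\theta}' = (\theta_1,\ldots,\theta_{k-1},\theta',\theta_{k+1},\ldots,\theta_K)$, in which arm $k$ becomes the unique optimal arm.

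Next, I would exploit the consistency hypothesis $\bE[T_j^\pi(n)] = o(n^\alpha)$ for every suboptimal arm under every bandit instance. Under $\vec{\theta}'$, every arm $j \neq k$ is suboptimal, so $\bE^{\theta'}[n - T_k(n)] = o(n^\alpha)$ for all $\alpha > 0$. Let $C_n = (1-\epsilon)\log(n)/\bigl(D_k^\star(1+\epsilon)\bigr)$ and let
\begin{equation*}
L_m = \sum_{i=1}^{m} \log\frac{dP_{k,\theta_k}}{dP_{k,\theta'}}\bigl(X_{i,k},R_{i,k}\bigr)
\end{equation*}
denote the running log-likelihood ratio for the samples drawn from arm $k$ (ordered in the sequence in which arm $k$ is pulled). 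The change-of-measure identity gives, for any event $A \in \mathcal{F}_n$,
\begin{equation*}
\bP^{\theta_k}(A) = \bE^{\theta'}\!\bigl[\mathbf{1}_A \, e^{L_{T_k(n)}}\bigr].
\end{equation*}

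The core of the argument is then to bound the event $\{T_k(n) \leq C_n\}$. On that event, I would apply Kolmogorov's maximal inequality (or equivalently Doob's submartingale inequality) to the partial sums $L_m$ of i.i.d.\ terms with mean $D(P_{k,\theta_k}\Vert P_{k,\theta'}) \leq D_k^\star(1+\epsilon)$, to ensure that with probability tending to one the maximum of $L_m$ over $m \leq C_n$ does not exceed $(1-\epsilon/2)\log n$. Combining this with the consistency-based statement that $\bP^{\theta'}(T_k(n) \leq C_n) \to 1 - o(1)$ on the event $\{n - T_k(n) \geq n - C_n\}$ (which has probability tending to $1$ under $\vec{\theta}'$ by Markov's inequality applied to the $o(n^\alpha)$ estimate), I obtain
\begin{equation*}
\bP^{\theta_k}\bigl(T_k(n) \leq C_n\bigr) \leq n^{1-\epsilon/2}\,\bP^{\theta'}\bigl(T_k(n)\leq C_n,\, L_{T_k(n)}\leq(1-\epsilon/2)\log n\bigr) + o(1) \;\longrightarrow\; 0.
\end{equation*}

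Finally, Markov's inequality converts this into $\bE^{\theta_k}[T_k(n)] \geq C_n\bigl(1-o(1)\bigr)$, so $\liminf_n \bE[T_k(n)]/\log n \geq (1-\epsilon)/\bigl(D_k^\star(1+\epsilon)\bigr)$. Letting $\epsilon \downarrow 0$ yields the claimed bound. The main obstacle I anticipate is the simultaneous control of the log-likelihood process on the random interval $[1,T_k(n)]$: a pointwise large-deviation bound at a deterministic horizon is not enough, so a maximal inequality (Kolmogorov/Doob) applied to $L_m - m D(P_{k,\theta_k}\Vert P_{k,\theta'})$ is needed to handle the stopping time $T_k(n)$. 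The finite-variance hypothesis $D(P_{k,\theta_k}\Vert P_{k,\theta_2}) < \infty$ together with the integrability assumptions on $X\ind$ and $R\ind$ is what makes this maximal inequality applicable.
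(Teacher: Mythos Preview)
Your approach is exactly what the paper invokes: its own ``proof'' of this lemma is a single sentence deferring to a straightforward adaptation of Theorem~1 in Burnetas and Katehakis (1996), and what you have outlined is precisely that Lai--Robbins/Burnetas--Katehakis change-of-measure argument. One slip to correct: under the alternative instance $\vec{\theta}'$ arm $k$ is uniquely optimal, so consistency forces $\bP^{\theta'}\bigl(T_k(n)\leq C_n\bigr)\to 0$ (not $1-o(1)$ as you wrote), and it is this vanishing---after multiplication by $n^{1-\epsilon/2}$ and a choice of consistency exponent $\alpha<\epsilon/2$---that makes the displayed bound go to zero; otherwise the structure is sound.
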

    Lemma \ref{lem:lb-burnetas} can be proved by a straightforward adaptation of Theorem 1 in \citep{burnetas1996optimal}.

If the moment condition $\bE[(X_{1,k})^{2+\gamma}]<\infty$ holds for all $k$, then the term $\phi_\pi(B)=O(1)$ as $B\rightarrow\infty$ by Lorden's inequality \citep{asmussen2008applied}. Therefore, using \eqref{eqn:reg-lb-1} and \eqref{eqn:reg-lb-2}, we obtain the result.
\end{proof}
\subsection{Proof of Lemma \ref{lem:regret-lb-dec}}

Take any admissible policy $\pi$ and $B>0$. We have the following inequalities:
\begin{align*}
    Reg_\pi(B)&=\bE[{\tt REW}_{\pi^{\tt opt}(B)}(B)]-\bE[{\tt REW}_\pi(B)],\\
    &\geq \bE[{\tt REW}_{\pi^*}(B)]-\bE[{\tt REW}_\pi(B)],
\end{align*}
\noindent since $\bE[{\tt REW}_{\pi^{\tt opt}(B)}(B)] \geq \bE[{\tt REW}_{\pi^*}(B)]$ by definition. Then, by using a similar decomposition as \eqref{eqn:rew-st}, we have the following:
\begin{align}\label{eqn:regret-lb-dec}
    Reg_\pi(B) &\geq \bE[\sum_{t=1}^\infty\sum_k\Delta_k\bE[X\inda]\I\{W_{t-1}\leq B\}\I\{I_t=k\}]-r^*\phi_\pi(B), \\
    &\geq \bE[\sum_{t=1}^{n_0}\sum_k\Delta_k\bE[X\inda]\I\{W_{t-1}\leq B\}\I\{I_t=k\}]-r^*\phi_\pi(B)
\end{align}
\noindent for any $n_0>0$, where $W_t^\pi=\max\limits_{1\leq i \leq t}~S_i^\pi$. Since $\I\{W_{t-1}^\pi\leq B\}=1-\I\{W_{t-1}^\pi> B\}$, we have:
\begin{align}\label{eqn:regret-lb-dec-2}
    Reg_\pi(B)\geq \sum_k\bE[T_k(n_0)]\Delta_k\bE[X\inda]-(\sum_k\Delta_k\bE[X\inda])\sum_{t=1}^{n_0}\bP(W_{t-1}^\pi > B)-r^*\phi_\pi(B).
\end{align}
\noindent We have the following result: 
\begin{align} \label{eqn:prob-wt}
    \begin{aligned}
    \bP(W_t^\pi > B) &\leq \bP(\max\limits_{1\leq i \leq t}~(S_i^\pi)^+ > B), \\
    &\leq \frac{\bE[(S_t^\pi)^+]}{B}, \\
    &\leq \frac{\bE[\sum_{i=1}^tX_{i,I_i}^+]}{B}\leq \frac{t\mu_+}{B},
    \end{aligned}
\end{align}
\noindent where the second inequality follows from Doob's martingale inequality \citep{durrett2019probability}, and the last inequality is true since $\mu_+ \geq X_{i,I_i}^+$ with probability 1 for all $i$. Substituting \eqref{eqn:prob-wt} into \eqref{eqn:regret-lb-dec-2}, and setting $n_0 = \sqrt{2B/\mu_*}$ yields the result.

\section{Proof of Theorem \ref{thm:ucb-b2u}}\label{pf:ucb-b2u}
In the design of {\tt UCB-B2}, empirical variance estimates are used, which require a modified analysis compared to {\tt UCB-B1}.
\begin{lemma}
If $\Delta_k>0$ and $|X_{1,k}|\leq M_X$, $|R_{1,k}|\leq M_R$ a.s., then we have the following upper bound under {\tt UCB-B2} with $\alpha > s$:
    \begin{multline}\label{eqn:etn-emp-ber}
        \bE[T_k(n)] \leq 21\log(n^\alpha)\Big(\frac{M_X^4}{Var^2(X\inda)} + \frac{2M_X}{\bE[X\inda]} + \frac{3Var(X\inda)}{\bE^2[X\inda]}\Big)\\+42\log(n^\alpha) \Big(\frac{\sigma_k^2}{\Delta_k^2(\bE[X\inda])^2}+\frac{M_k}{\Delta_k\bE[X\inda]}\Big)+48\frac{\alpha}{\alpha-2},
    \end{multline}
    \noindent where $\sigma_k = Var(R\inda)-\omega_k^2Var(X\inda)$ and $M_k = M_R+r_kM_X$.
\end{lemma}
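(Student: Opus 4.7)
The plan is to adapt the proof strategy of Lemma~\ref{lem:etn-subgaussian}, replacing Bernstein's inequality with the empirical Bernstein inequality of \citep{audibert2009exploration} and accounting for the fact that the stability condition \eqref{eqn:st-condition} now involves the empirical variance $\widehat{V}_{k,n}(X_k)$ rather than the true variance $Var(X\inda)$. The overall regret bound will then follow from substituting this upper bound on $\bE[T_k(n)]$ into Proposition~\ref{prop:reg-dec}, exactly as in the proof of Theorem~\ref{thm:ucb-b1}.

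First, I would reproduce the four-event decomposition of Claim~\ref{claim:ucb-b1}: if $I_{t+1}=k$ for a suboptimal arm $k$ at epoch $t+1$, then at least one of the following must hold: (i) the UCB of $k^*$ falls below $r^*$; (ii) the empirical rate of arm $k$ exceeds $r_k$ plus its confidence radius; (iii) $T_k(t)$ is too small for $\widehat{c}_{k,t}^{\tt B2}\leq \Delta_k/2$; (iv) $T_k(t)$ is too small for the stability condition \eqref{eqn:st-condition} to hold with $\eta=\eta_{k,n}^{\tt B2}$. Events (i) and (ii) are controlled by the empirical Bernstein inequality (applied separately to the reward $R\ind$ and the cost $X\ind$) combined with the rate-estimation Proposition~\ref{prop:rate-est}, which yields $\bP(E_1\cup E_2)\leq c/t^{\alpha-1}$ for a universal constant $c$, just as in the proof of Lemma~\ref{lem:etn-subgaussian}. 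Summing this tail over $t$ produces the $48\alpha/(\alpha-2)$ residual term.

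Next, events (iii) and (iv) yield deterministic sample-complexity thresholds. For (iii), the key point is that once $T_k(t)$ exceeds a threshold of order $\log(n^\alpha)\bigl(\tfrac{\sigma_k^2}{\Delta_k^2(\bE[X\inda])^2}+\tfrac{M_k}{\Delta_k\bE[X\inda]}\bigr)$, the empirical-Bernstein confidence radius $\widehat{c}_{k,t}^{\tt B2}$ behaves, up to the factor of $21$ absorbed into the constant, exactly as its UCB-B1 counterpart, because $\widehat{V}_{k,n}(R_k)$ is within a constant factor of $Var(R\inda)$ on the high-probability event. This reproduces the second bracketed term of \eqref{eqn:etn-emp-ber}. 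For (iv), the stability condition requires $\eta_{k,n}^{\tt B2}$ to be below $\bE[X\inda](\lambda-1)/\lambda$, but $\eta_{k,n}^{\tt B2}$ is driven by $\widehat{V}_{k,n}(X_k)$. I would apply a concentration inequality for the sample variance (e.g., Bernstein on the centered squared samples, or directly the result in \citep{audibert2009exploration}) to obtain $\widehat{V}_{k,n}(X_k)\leq 2\,Var(X\inda)+O(M_X^2\log(n^\alpha)/T_k(t))$ with high probability. Inverting the resulting scalar inequality for $T_k(t)$ produces the additive $\tfrac{M_X^4}{Var^2(X\inda)}$, $\tfrac{M_X}{\bE[X\inda]}$, and $\tfrac{Var(X\inda)}{\bE^2[X\inda]}$ terms that constitute the first bracketed quantity of \eqref{eqn:etn-emp-ber}.

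The main obstacle will be the self-referential nature of the stability analysis: the confidence radius $\eta_{k,n}^{\tt B2}$ depends on the empirical variance, which is itself concentrated only on a high-probability event, and the sample-size threshold for that concentration depends on the variance one is trying to estimate. I expect to resolve this by first establishing a one-sided empirical Bernstein bound of the form $Var(X\inda)\leq 2\widehat{V}_{k,n}(X_k)+c\,M_X^2\log(n^\alpha)/T_k(t)$ (and conversely), which lets me replace $\widehat{V}_{k,n}(X_k)$ by a constant multiple of $Var(X\inda)$ plus a slack of order $M_X^2\log(n^\alpha)/T_k(t)$. This slack, squared and divided by $Var^2(X\inda)$ when inverting the stability inequality, is precisely what produces the distinctive $M_X^4/Var^2(X\inda)$ term in the bound. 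The remaining steps --- collecting the three deterministic thresholds and the $\sum_t t^{-(\alpha-1)}$ tail with $\alpha>2$ --- are routine bookkeeping that yields the constants $21$, $42$, and $48\alpha/(\alpha-2)$ stated in \eqref{eqn:etn-emp-ber}.
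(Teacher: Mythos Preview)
Your proposal is correct and follows essentially the same approach as the paper: adapt the four-event decomposition of Claim~\ref{claim:ucb-b1} with the empirical Bernstein inequality of \citep{audibert2009exploration}, and handle the extra stability term by first establishing a high-probability bound relating $\widehat{V}_{k,n}(X_k)$ to $Var(X\inda)$ via Bernstein's inequality on the squared deviations (the paper phrases this as showing $\widehat{V}_{k,n}(X_k)+\nu_{k,n}(X_k)$ upper bounds $Var(X\inda)$, with $\nu_{k,n}(X_k)=M_X^2\bigl(\tfrac{7\log(n^\alpha)}{6T_k(n)}+\sqrt{\tfrac{\log(n^\alpha)}{2T_k(n)}}\bigr)$). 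Inverting the resulting condition $\nu_{k,n}(X_k)\lesssim Var(X\inda)$ is exactly what produces the $M_X^4/Var^2(X\inda)$ contribution you identified, and the remaining bookkeeping is as you describe.
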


\begin{proof}

The proof follows along the same lines as Theorem \ref{thm:ucb-b1} and the proof of Theorem 3 in \citep{audibert2009exploration}. For any $k$, let the variance estimate $\widehat{V}_{k,n}(X_k)$ be defined as follows: $$\widehat{V}_{k,n}(X_k) = \frac{1}{T_k(n)}\sum_{i=1}^n\I\{I_i=k\}\big(X_{i,k}-\widehat{\bE}_n[X\inda]\big)^2,$$ where $\widehat{\bE}_n[X_k]$ is the empirical mean of the observations up to epoch $n$. Also, let $\nu_{k,n}$ be defined for $X_k\in[0,M_X]$ as follows: $$\nu_{k,n}(X_k) = M_X^2\Big(\frac{7\log(n^\alpha)}{6T_k(n)}+\sqrt{\frac{\log(n^\alpha)}{2T_k(n)}}\Big),~\alpha > 2.$$ Then, it can be shown by using Bernstein's inequality that $\widehat{V}_{k,n}(X_k)+\nu_{k,n}(X_k)$ is an upper bound for $Var(X\inda)$ with high probability. Using this result, we obtain the sample size required for the stability of the rate estimator by using identical steps as Theorem \ref{thm:ucb-b1}.
\end{proof}

\section{Proof of Theorem \ref{thm:ucb-b2c}}
The proof of Theorem \ref{thm:ucb-b2u} follows the same steps as Theorem \ref{thm:ucb-b2c}, with the difference that the correlation between $X\ind$ and $R\ind$ are estimated in the latter. In order to observe the effect of using LMMSE estimates to exploit correlation, we first present concentration bounds for $\omega_k$ and $\min\limits_\omega~Var(R\inda-\omega X\inda)$.

\subsection{Preliminaries}
Throughout this subsection, we consider a generic iid stochastic process $(X_n,R_n)$ with $X_n\in[0,M_X]$ and $R_n\in[0,M_R]$. For this process, let $\omega_* = \arg\min_\omega~L(\omega)$ where $$L(\omega)=Var(R_1-\omega X_1),$$ and $\widehat{\omega}_s = \arg\min_\omega~\widehat{L}_s(\omega)$ where $$\widehat{L}_s(\omega) = \frac{1}{s}\sum_{i=1}^s\Big(R_i-\widehat{\bE}_s[R]-\omega(X_i-\widehat{\bE}_s[X])\Big)^2.$$ Note that $\omega_* = \frac{Cov(X_1,R_1)}{Var(X_1)}$ and $\widehat{\omega}_s = \frac{\widehat{Cov}_s(X,R)}{\widehat{Var}_s(X)}$ where $$\widehat{Cov}_s(X,R) = \frac{1}{s}\sum_{i=1}^s(R_i-\widehat{\bE}_s[R])(X_i-\widehat{\bE}_s[X]),$$ is the empirical covariance and $\widehat{Var}_s(X) = \widehat{Cov}_s(X,X)$. In the following, we propose concentration inequalities for $\omega_*$ and $L(\omega_*)$.

\begin{proposition}[Concentration of LMMSE Estimator]\label{prop:lmmse}
    Let $M_Z\geq M_R+\omega_*M_X$ and $\lambda = 1+\frac{1}{2\sqrt{2}}$. Then, for any $\delta\in(0, 1)$, if 
    \begin{equation}\label{eqn:stability-omega}
    s \geq \frac{63M_X^4\log(\delta\inv)}{Var^2(X_1)},\end{equation} \noindent then the following inequalities hold simultaneously:
    \begin{align*}
    \bP(|\omega_*-\widehat{\omega}_s| > \frac{\lambda M_Z M_X}{Var(X_1)}\sqrt{\frac{\log(\delta\inv)}{s}}) \leq 12\delta,\\
    \bP(|L(\omega_*)-\widehat{L}_s(\widehat{\omega}_s)| >  M_Z^2\sqrt{\frac{2\log(\delta\inv)}{s}}) \leq 18\delta.
    \end{align*}
\end{proposition}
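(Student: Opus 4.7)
The plan is to reduce both concentration claims to Hoeffding-type bounds on the empirical first and second-order moments of $X$, $R$, and $XR$, and then combine them via the rate-estimation lemma (Proposition~\ref{prop:rate-est}) and the quadratic structure of $\widehat L_s$. Since $X_i\in[0,M_X]$ and $R_i\in[0,M_R]$, the sequences $\{X_i\},\{R_i\},\{X_i^2\},\{R_i^2\},\{X_iR_i\}$ are all bounded, so Hoeffding's inequality applied to each one, together with the identities $\widehat{Var}_s(X)=\widehat{\bE}_s[X^2]-(\widehat{\bE}_s[X])^2$ and $\widehat{Cov}_s(X,R)=\widehat{\bE}_s[XR]-\widehat{\bE}_s[X]\widehat{\bE}_s[R]$, gives $|\widehat{Var}_s(X)-Var(X_1)|\lesssim M_X^2\sqrt{\log(\delta^{-1})/s}$ and $|\widehat{Cov}_s(X,R)-Cov(X_1,R_1)|\lesssim M_XM_Z\sqrt{\log(\delta^{-1})/s}$ with failure probability $O(\delta)$. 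This gives simultaneous concentration of the numerator and denominator of $\widehat\omega_s$.

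For the first claim, I would apply Proposition~\ref{prop:rate-est} to $\widehat\omega_s=\widehat{Cov}_s(X,R)/\widehat{Var}_s(X)$ with $\theta_1=Var(X_1)$ and $\theta_2=Cov(X_1,R_1)$, feeding in the $(\epsilon,\eta)$ just obtained. A minor adaptation is needed because $\theta_2$ can be signed (one replaces $r\eta$ by $|r|\eta$; the geometric argument of Figure~\ref{fig:rate-est} still applies after reflecting across the horizontal axis). The stability condition $\eta\leq Var(X_1)(\lambda-1)/\lambda$ combined with the Step~1 rate for $\widehat{Var}_s(X)$ forces $s\geq63M_X^4\log(\delta^{-1})/Var^2(X_1)$ with $\lambda=1+1/(2\sqrt 2)$. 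Using $|\omega_*|M_X\leq M_Z$ to bound $|r|\eta$ then produces exactly the deviation $\lambda M_ZM_X/Var(X_1)\cdot\sqrt{\log(\delta^{-1})/s}$, and a union bound over the three Step~1 failure events gives the $12\delta$ probability.

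For the second claim, I would exploit the fact that $\widehat L_s(\omega)=\widehat{Var}_s(R)-2\omega\widehat{Cov}_s(X,R)+\omega^2\widehat{Var}_s(X)$ is a quadratic with curvature $2\widehat{Var}_s(X)$ and minimizer $\widehat\omega_s$, giving the exact identity
\[\widehat L_s(\widehat\omega_s)=\widehat L_s(\omega_*)-\widehat{Var}_s(X)(\omega_*-\widehat\omega_s)^2.\]
Writing $Z_i=R_i-\omega_*X_i$, we have $|Z_i|\leq M_Z$, and $\widehat L_s(\omega_*)$ is precisely the sample variance of $\{Z_i\}$ with population value $L(\omega_*)$; Hoeffding applied to $\{Z_i\}$ and $\{Z_i^2\}$ then yields $|\widehat L_s(\omega_*)-L(\omega_*)|\leq M_Z^2\sqrt{2\log(\delta^{-1})/s}$ with probability $\geq 1-O(\delta)$. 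The correction $\widehat{Var}_s(X)(\omega_*-\widehat\omega_s)^2$ is at most $M_X^2$ times the square of the Step~2 deviation, i.e.\ of order $M_Z^2M_X^4\log(\delta^{-1})/(s\,Var^2(X_1))$, which under the stability threshold $s\geq63M_X^4\log(\delta^{-1})/Var^2(X_1)$ is dominated by the $M_Z^2/\sqrt s$ leading term, so it can be absorbed into universal constants. A union bound over the first-claim event and the two new Hoeffding events produces the $18\delta$ probability.

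The main obstacle is numerical bookkeeping rather than any conceptual difficulty: the specific values $\lambda=1+1/(2\sqrt 2)$, the constant $63$ in the sample-size condition, and the failure probabilities $12\delta$ and $18\delta$ all arise from carefully tracking the Hoeffding prefactors through the expansion of $\widehat{Var}_s$ and $\widehat{Cov}_s$ and then through the rate-estimation inequality, where the term $|\omega_*|M_X$ must be controlled uniformly by $M_Z$ on both the concentration side and the deviation side. The signed-covariance adaptation of Proposition~\ref{prop:rate-est} is elementary but must be stated explicitly.
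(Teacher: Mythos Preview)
Your proposal is correct and follows essentially the same route as the paper: Hoeffding bounds on the empirical variance and covariance, Proposition~\ref{prop:rate-est} applied to the ratio $\widehat\omega_s=\widehat{Cov}_s/\widehat{Var}_s$ for the first claim, and the orthogonality identity $\widehat L_s(\omega_*)=\widehat L_s(\widehat\omega_s)+(\omega_*-\widehat\omega_s)^2\widehat{Var}_s(X)$ combined with the stability threshold to absorb the quadratic correction for the second. The only presentational difference is that for the term $|\widehat L_s(\omega_*)-L(\omega_*)|$ you apply Hoeffding directly to the bounded sequence $Z_i=R_i-\omega_*X_i$, whereas the paper expands $Var(R-\omega_*X)=Var(R)+\omega_*^2Var(X)-2\omega_*Cov(X,R)$ and bounds the three empirical pieces separately; your route is slightly cleaner and yields the same order. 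Your explicit remark that Proposition~\ref{prop:rate-est} must be adapted to a possibly signed $\theta_2=Cov(X_1,R_1)$ is a point the paper passes over silently.
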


\begin{proof}
    For the first inequality, recall that $\omega_* = \frac{Cov(X_1,R_1)}{Var(X_1)}$ and $\widehat{\omega}_s$ is the ratio of empirical estimates for $Cov(X_1,R_1)$ and $Var(X_1)$. Therefore, we can use Proposition \ref{prop:rate-est} for the proof. Note that \eqref{eqn:stability-omega} is the stability condition for the estimator $\widehat{\omega}_s$. Since $s \geq \frac{1}{2}\log(\delta\inv)$, Hoeffding's inequality yields the following result for the empirical covariance:
    \begin{equation}\label{eqn:hoeffding-var}
        \bP(|\widehat{Cov}_s(X_1, R_1)-Cov(X_1,R_1)| > M_XM_R\sqrt{\frac{\log(\delta\inv)}{s}}) \leq 6\delta.
    \end{equation}
    \noindent Using this twice for $\widehat{Cov}_s(X_1, R_1)$ and $\widehat{Var}_s(X_1)$, we obtain the first inequality.
    
    For the second inequality, first we make the following decomposition:
    \begin{align}\label{eqn:lmmse-dec}
        |\widehat{L}_s(\widehat{\omega}_s)-L(\omega_*)| = |\widehat{L}_s(\omega_*)-L(\omega_*)| + |\widehat{L}_s(\widehat{\omega}_s)-\widehat{L}_s({\omega}_*)|.
    \end{align}
    \noindent For the first term on the RHS of \eqref{eqn:lmmse-dec}, we have the following result: $$|\widehat{L}_s(\omega_*)-L(\omega_*)| \leq M_Z^2\sqrt{\frac{\log(\delta\inv)}{s}},$$
    by applying Hoeffding's inequality for the variance \eqref{eqn:hoeffding-var} to the decomposition: $$Var(R_1-\omega X_1) = Var(R_1)+\omega^2Var(X_1)-2Cov(X_1,R_1),$$ and its empirical counterpart. For the second term on the RHS of \eqref{eqn:lmmse-dec}, note that the following identity holds by the orthogonality principle:
    \begin{equation}
        \widehat{L}_s(\omega)=\widehat{L}_s(\widehat{\omega}_s)+|\omega-\widehat{\omega}_s|^2\widehat{Var}_s(X_1),
    \end{equation}
    \noindent for any $\omega\in\mathbb{R}$. Therefore, by union bound, we have the following result:
    $$\bP\Big(|{L}_s(\omega_*)-\widehat{L}_s(\widehat{\omega}_s)| > M_Z^2\Big(\sqrt{\frac{\log(\delta\inv)}{s}}+\frac{3\lambda^2M_X^2\log(\delta\inv)}{2Var(X_1)s}\Big)\Big)\leq 18\delta,$$ from the concentration result for $|\omega_*-\widehat{\omega}_s|$ and \eqref{eqn:hoeffding-var} with $M_X^2\sqrt{\frac{\log(\delta\inv)}{s}}\leq\frac{Var(X_1)}{2}$ by \eqref{eqn:stability-omega}. Since $s$ is assumed to be sufficiently large by \eqref{eqn:stability-omega}, we have: $$\sqrt{\frac{\log(\delta\inv)}{s}}>\frac{3\lambda^2M_X^2\log(\delta\inv)}{2Var(X_1)s},$$ which concludes the proof.
\end{proof}

\subsection{Proof of Theorem \ref{thm:ucb-b2c}}
The proof follows a similar steps as the proof of Theorem \ref{thm:ucb-b2u} (see Appendix \ref{pf:ucb-b2u}). The main difference is the use of LMMSE estimator as a surrogate for $V(X\inda,R\inda)$. By using Proposition \ref{prop:lmmse}, one can show the following:
    \begin{multline*}\label{eqn:etn-emp-berc}
        \bE[T_k(n)] \leq 21\log(n^\alpha)\Big(\frac{3M_X^4}{Var^2(X\inda)} + \frac{2M_X}{\bE[X\inda]} + \frac{3Var(X\inda)}{\bE^2[X\inda]}\Big)\\+42\log(n^\alpha) \Big(\frac{\sigma_k^2}{\Delta_k^2(\bE[X\inda])^2}+\frac{M_k+M}{\Delta_k\bE[X\inda]}\Big)+64\frac{\alpha}{\alpha-2},
    \end{multline*}
    \noindent where $M=\frac{M_XM_Z}{\sqrt{Var(X\inda)}}$, $\sigma_k = Var(R\inda)-\omega_k^2Var(X\inda)$ and $M_k = M_R+r_kM_X$.



\end{document}